\documentclass[10pt]{article} 

\usepackage[preprint]{tmlr}

\usepackage{amsmath,amsfonts,bm}

\def\eqref#1{equation~\ref{#1}}

\def\1{\bm{1}}

\DeclareMathAlphabet{\mathsfit}{\encodingdefault}{\sfdefault}{m}{sl}
\SetMathAlphabet{\mathsfit}{bold}{\encodingdefault}{\sfdefault}{bx}{n}

\newcommand{\E}{\mathbb{E}}

\newcommand{\R}{\mathbb{R}}

\newcommand{\KL}{D_{\mathrm{KL}}}

\usepackage{hyperref}
\usepackage{url}

\usepackage{amsthm}

\usepackage{amsmath}
\DeclareMathOperator{\sgn}{sgn}
\DeclareMathOperator{\Bern}{Bern}

\theoremstyle{plain}
\newtheorem{theorem}{Theorem}
\newtheorem{lemma}[theorem]{Lemma}

\newtheorem{corollary}[theorem]{Corollary}

\theoremstyle{definition}

\newtheorem{assumption}[theorem]{Assumption}

\theoremstyle{remark}
\newtheorem{remark}[theorem]{Remark}

\usepackage{amssymb}

\providecommand{\X}{\mathcal{X}}
\providecommand{\Y}{\mathcal{Y}}
\providecommand{\R}{\mathbb{R}}
\providecommand{\N}{\mathbb{N}}
\providecommand{\E}{\mathbb{E}}
\providecommand{\W}{\mathbf{W}}

\renewcommand{\L}{\mathcal{L}}
\renewcommand{\v}{\mathbf{v}}

\usepackage{xcolor}
\newcommand{\Joel}[1]{\textcolor{blue}{[Joel: #1]}}

\title{Statistical Learning Theory for Direct Preference Optimization}

\author{\name Joel Wills \email J.Wills@maths.usyd.edu.au \\
      \addr School of Mathematics and Statistics \\ The University of Sydney
      \AND
      \name Yuanfan Li \email yuli0846@uni.sydney.edu.au \\
      \addr School of Mathematics and Statistics \\ The University of Sydney
      \AND
      \name Yiming Ying \email yiming.ying@sydney.edu.au\\
      \addr School of Mathematics and Statistics \\ The University of Sydney
      \AND
      \name Ding-Xuan Zhou \email dingxuan.zhou@sydney.edu.au\\
      \addr School of Mathematics and Statistics \\ The University of Sydney}

\def\month{10}  
\def\year{2026} 
\def\openreview{\url{https://openreview.net/forum?id=XXXX}} 

\begin{document}

\maketitle

\begin{abstract}
Direct Preference Optimization (DPO) learns from pairwise preferences through a logistic objective whose score-difference parameterization follows the Bradley--Terry model. We study how failure of this modeling assumption limits DPO, distinguishing structural misspecification from limited model capacity and finite-sample estimation. Our main contribution is a characterization of structural approximation error through the triangle-curl of the preference log-odds, which vanishes exactly when preferences admit a Bradley--Terry representation. We upper bound the achievable excess logistic risk by a score-class approximation term and a triangle-curl term, and establish a lower bound proportional to triangle-curl under additional boundedness assumptions. For ReLU score models, including clipped models, we derive explicit approximation rates showing that increasing capacity can eliminate the score-approximation term while leaving the structural obstruction unchanged. We further establish finite-sample bounds for clipped empirical risk minimizers that separate network approximation, structural misspecification, and estimation error. We place these results within a surrogate-risk framework by identifying contextual preference accuracy (CPA) as a natural measure of agreement between preference judgments and changes in pairwise probability ratios relative to the reference policy. Standard logistic consistency and calibration results connect DPO's objective to this policy property and transfer our excess-risk upper bounds to CPA, without treating CPA as a complete measure of generated-response quality.
\end{abstract}

\section{Introduction}
Aligning large language models (LLMs) with human
preferences commonly relies on pairwise comparisons,
where annotators indicate which of two responses
is preferred for a given prompt. Learning from
these comparisons requires a model of how preferences
relate across responses. The resulting policy is
therefore shaped not only by the available data,
but also by the restrictions imposed by this model.

Reinforcement Learning from Human Feedback
(RLHF)~\citep{christiano2017deep,stiennon2020learning,
ouyang2022training,kaufmann2023survey, lambert2025reinforcement} first fits a reward model to
preference data and then optimizes a policy using
reinforcement learning. Direct Preference Optimization
(DPO)~\citep{rafailov2023direct} simplifies this
pipeline through a single supervised objective.
Its logistic loss models preference log-odds as
differences of policy-to-reference log-density ratios,
following the Bradley--Terry (BT) model~\citep{bradley1952rank}.

The distinction between policy expressiveness and
preference representability is central.
Even an unrestricted scalar score cannot represent
arbitrary pairwise preference log-odds: score
differences must sum to zero around every triangle
of responses. Cyclic preferences violate this
condition, but even preferences with a consistent
ordering need not satisfy it.
Increasing model capacity therefore need not
eliminate misspecification.
Recent work studies misspecification arising from
restrictions on the policy class under BT
preferences~\citep{gopalan2026dpo}.
We examine the complementary question of what
happens when the preferences themselves do not
admit any score-difference representation.

Our central question is how this structural
restriction limits the logistic risk achievable
by DPO, and how that limitation can be distinguished
from insufficient model capacity and finite
preference data. We approach this question through
statistical learning theory, separating structural
misspecification, score-class approximation, and
finite-sample estimation.

To connect this risk analysis to policy behavior,
we adopt the classical surrogate-risk
framework~\citep{bartlett2006convexity,zhang2004statistical}.
We consider contextual preference accuracy (CPA),
the accuracy of the preference decisions implicit
in DPO. At the policy level, CPA measures how often
the policy increases the probability of the
preferred response relative to the other response,
compared with the reference policy.
This captures a specific aspect of policy behavior,
without guaranteeing improved generated-response
quality. Calibration connects excess logistic risk
to CPA, allowing our analysis of DPO's structural
and statistical limitations to yield guarantees
for this policy property.

\paragraph{Contributions.}
We formulate DPO within the classical surrogate-risk
framework~\citep{bartlett2006convexity,zhang2004statistical},
identifying contextual preference accuracy (CPA)
as a natural measure of agreement between preference
judgments and policy reweighting relative to the
reference. Established consistency and calibration
properties of logistic loss connect this criterion
to DPO's objective and allow our excess-risk upper
bounds to transfer to CPA. Our contribution here
is this formulation and its policy interpretation,
rather than new calibration properties of logistic loss.

Our main technical contribution is a characterization
of the structural approximation error imposed by
DPO's score-difference parameterization.
We introduce a triangle-curl functional of the
preference log-odds that vanishes exactly when
preferences admit a Bradley--Terry representation.
We upper bound the achievable excess logistic risk
by a score-class approximation term and a triangle-curl
term, and establish a lower bound proportional to
triangle-curl under additional boundedness assumptions.
For ReLU score models, including clipped models,
we derive explicit rates showing that the
capacity-dependent approximation term can be made
arbitrarily small.
We further establish finite-sample bounds for
clipped empirical risk minimizers, separating
network approximation, structural misspecification,
and estimation error.

\subsection{Related Works}
A growing literature develops alternatives to DPO,
including methods for noisy preference
data~\citep{chowdhury2024provably}, alternative
preference objectives~\citep{azar2024general,zhao2023slic},
reference-free optimization~\citep{meng2024simpo},
and joint supervised fine-tuning and preference
optimization~\citep{hong2024orpo}.
Alongside these algorithmic developments,
theoretical work has established guarantees
under specific modeling assumptions.
However, our analysis addresses the less studied question of
how failure of the Bradley--Terry assumption limits DPO.

Our analysis follows the classical surrogate-risk framework in statistical learning theory (SLT)~\citep{bartlett2006convexity,zhang2004statistical}, which studies the relationship between a tractable surrogate loss and a target population risk. Fisher consistency and calibration are well understood for binary classification and ranking, including AUC optimization~\citep{clemenccon2008ranking,gao2015consistency,YingAUC2016}, but analogous analyses of preference optimization objectives for LLM alignment remain limited.

Recent work has also examined the limitations of the Bradley--Terry (BT) model. \citet{pukdee2026does} characterize when pairwise comparison data admits a BT representation and analyze the behavior of BT learning under model misspecification. From a social choice perspective, \citet{dong2026representation} show that excess risk decomposes into representation and aggregation errors, the latter arising from Condorcet cycles in pooled preferences. Meanwhile, \citet{golz2026distortion} quantify the loss incurred when aggregating heterogeneous BT models into a single policy or ranking. These works highlight fundamental limitations of representing heterogeneous preferences with a single score-based model.

Empirically, \citet{chen2024preference} show that preference learning methods such as DPO often produce only modest improvements in preference accuracy, suggesting a persistent gap between optimization objectives and alignment performance that may partly arise from the influence of the reference model. \citet{gopalan2026dpo} study DPO under
Bradley--Terry preferences, showing how restrictions
on the policy class can prevent its induced rewards
from representing the preference-generating reward.
Our analysis addresses a distinct source of
misspecification: the preference log-odds may not
admit a score-difference representation at all.
We quantify this obstruction through triangle-curl
and separate it from score-class approximation error.
The two perspectives are complementary, distinguishing
limitations of the chosen policy class from
limitations of the score-difference representation
itself.

\section{Background and Problem Setup}
\label{sec:learning setup}

Preference-based alignment methods, including RLHF and
DPO, use judgments comparing two candidate responses
to the same prompt. We study the preference-prediction
problem implicit in the DPO objective and the
statistical consequences of its policy parameterization.

Let $\mathcal X$ denote the prompt space and
$\mathcal Y$ the response space. We observe preference
data of the form
\[
(X,Y,Y',A)\sim D,
\qquad A\in\{-1,+1\},
\]
where $A=1$ indicates that $Y$ is preferred to $Y'$,
and $A=-1$ indicates the opposite preference.
We assume that, conditional on $X$, the responses
$Y$ and $Y'$ are independently sampled from a common
distribution $p(\cdot\mid X)$.
Define the conditional preference probability
\begin{equation}
\label{eq:eta}
\eta(x,y,y')
:=
\mathbb P(A=1\mid X=x,Y=y,Y'=y').
\end{equation}

A policy assigns a conditional probability
distribution $\pi(\cdot\mid x)$ over responses to
each prompt $x$. We write $\pi(y\mid x)$ for its
probability mass or density, as appropriate.
Let $\pi_{\mathrm{ref}}$ be a fixed reference policy.
Where policy-to-reference ratios are used, we
consider policies and a reference policy with
positive masses or densities on the support of
the comparison distribution.

\paragraph{DPO as preference prediction.}
DPO uses the policy-induced score
\[
s_\pi(x,y)
:=
\tau\log
\frac{\pi(y\mid x)}{\pi_{\mathrm{ref}}(y\mid x)},
\qquad \tau>0,
\]
and the corresponding comparison function
\[
f_\pi(x,y,y')
:=
s_\pi(x,y)-s_\pi(x,y').
\]
Its population objective is
\begin{equation}
\label{eq: DPO objective}
-\mathbb E_{(X,Y,Y',A)\sim D}
\left[
\log\sigma\left(Af_\pi(X,Y,Y')\right)
\right],
\end{equation}
where $\sigma(t)=(1+e^{-t})^{-1}$
\citep{rafailov2023direct}.

Thus, DPO has the form of binary logistic
prediction with input $(X,Y,Y')$ and label $A$.
The quantity $\sigma(f_\pi(x,y,y'))$ is its
predicted probability that $y$ is preferred to
$y'$. Since $\sigma(f_\pi(x,y,y'))>1/2$ if and only if
$f_\pi(x,y,y')>0$, the sign of $f_\pi$ determines
which response the model predicts to be preferred,
with zero corresponding to a tie.
This decision compares policy-to-reference
density ratios, rather than raw policy
probabilities $\pi(y\mid x)$.

To analyse this prediction problem, we extend
the logistic objective to arbitrary measurable
triplet-wise predictors
$f:\mathcal X\times\mathcal Y\times\mathcal Y
\to\mathbb R$.
For $\ell(u)=\log(1+e^{-u})$, define
\[
\begin{aligned}
\mathcal L(f)
&:=
\mathbb E_{(X,Y,Y',A)\sim D}
\left[\ell\left(Af(X,Y,Y')\right)\right] =
-\mathbb E_{(X,Y,Y',A)\sim D}
\left[\log\sigma\left(Af(X,Y,Y')\right)\right].
\end{aligned}
\]
Substituting $f=f_\pi$ recovers
\eqref{eq: DPO objective}. DPO therefore minimizes
$\mathcal L(f_\pi)$ over the chosen policy class.

\paragraph{Contextual preference accuracy.}
The sign of $f_\pi$ determines the preference decision
implicit in DPO's logistic objective. To study these
decisions, for any measurable triplet-wise predictor
$f:\mathcal X\times\mathcal Y\times\mathcal Y\to\mathbb R$,
define its \emph{contextual preference accuracy (CPA)} as
\[
\begin{aligned}
P(f)
:=
\mathbb E_{X,Y,Y'}\Big[
&\eta(X,Y,Y')\,\mathbf{1}\{f(X,Y,Y')>0\}\\
&+\bigl(1-\eta(X,Y,Y')\bigr)\,
  \mathbf{1}\{f(X,Y,Y')<0\}\\
&+\tfrac12\,\mathbf{1}\{f(X,Y,Y')=0\}
\Big].
\end{aligned}
\]
This is the population accuracy of the induced
preference decisions, with half credit for ties.
Under DPO's parameterization, these decisions describe
how the policy reweights responses relative to the
reference: for positive policy and reference probabilities,
\[
f_\pi(x,y,y')>0
\quad\Longleftrightarrow\quad
\frac{\pi(y\mid x)}{\pi(y'\mid x)}
>
\frac{\pi_{\mathrm{ref}}(y\mid x)}
     {\pi_{\mathrm{ref}}(y'\mid x)}.
\]
Thus, $P(f_\pi)$ measures how often the policy increases
the probability of the preferred response relative to
the other response, compared with the reference policy,
with half credit for unchanged ratios.
In particular, if $P(f_\pi)=P^*:=\sup_f P(f)$, where
the supremum is over all measurable triplet-wise
predictors, then
Lemma~\ref{lem:character of pairwise preference accuracy maximizers}
implies that the displayed inequality holds for almost
every pair with $\eta(x,y,y')>1/2$.
However, as discussed by \citet{chen2024preference},
this density-ratio ordering does not imply
$\pi(y\mid x)>\pi(y'\mid x)$.
Moreover, both responses may lose probability even
when their ratio changes in the preferred direction.
CPA alone therefore does not establish an improvement
in the quality of generated responses.

The key question is how closely DPO's score-difference
predictors can approximate the optimal preference
log-odds. We study this through excess logistic risk,
using the surrogate-risk framework to connect risk
bounds to CPA. This separates the representational
limitations of DPO from the calibration properties
of the logistic loss.

\section{Fisher Consistency and Calibration}\label{sec:Calibration}
The consistency and calibration results in this
section are standard consequences of logistic
surrogate-risk theory~\citep{bartlett2006convexity,
zhang2004statistical}; we do not claim these results
as novel. Our contribution is the formulation of
DPO as a surrogate for CPA and its connection to
the policy interpretation developed above.
Working with unrestricted triplet-wise predictors
separates these properties of the loss from the
score-difference approximation issues studied in
Section~\ref{sec:score-diff approx}. First, we show Fisher consistency: minimizing the logistic risk recovers a predictor that maximizes CPA. Second, we strengthen this connection by establishing a quantitative calibration inequality, showing that excess logistic risk controls excess CPA. 
We include proofs for completeness.


\subsection{Fisher Consistency}\label{subsec: Fisher consistency}
To establish the Fisher consistency of logistic risk with respect to CPA, we characterize the population minimizers of logistic risk and the population maximizers of CPA.
The following lemma characterizes the population minimizers of logistic population risk. It is the direct three-variable (triplet-wise) analog of the characterization of functions minimizing the population logistic risk in binary classification \citep{bartlett2006convexity,zhang2004statistical}. Here, the
optimal object is the preference log-odds. 

All proofs of the results that follow can be found in the Appendix.


\begin{lemma}
\label{lem:character of pairwise logistic risk minimizers}
A measurable triplet-wise function \(f\) minimizes the population logistic risk if and only if
\[
f(x,y,y') = \log \frac{\eta(x,y,y')}{1 - \eta(x,y,y')}
\quad \text{a.s.},
\]
with the convention that the right hand side is $+\infty$ when $\eta(x,y,y') = 1$ and $-\infty$ when $\eta(x,y,y') = 0$. 
\end{lemma}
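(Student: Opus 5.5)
The plan is to reduce the global minimization to a pointwise problem by conditioning on the triplet. By the tower property, for any measurable \(f\) we have
\[
\mathcal L(f)=\mathbb E_{X,Y,Y'}\bigl[C_{\eta(X,Y,Y')}\bigl(f(X,Y,Y')\bigr)\bigr],
\qquad
C_\eta(t):=\eta\,\ell(t)+(1-\eta)\,\ell(-t).
\]
Here we allow \(t\in[-\infty,+\infty]\), with \(\ell(+\infty)=0\) and \(\ell(-\infty)=+\infty\), so that the extended-valued predictors in the statement make sense. The whole argument then comes down to two things: identifying the unique minimizer of the scalar function \(C_\eta\) for each fixed \(\eta\in[0,1]\), and lifting this from single points to the expectation.

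For the scalar step with \(\eta\in(0,1)\), we use that \(C_\eta\) is finite, smooth and strictly convex on \(\mathbb R\), since \(\ell''(u)=\sigma(u)\sigma(-u)>0\). Its derivative is \(C_\eta'(t)=-\eta\,\sigma(-t)+(1-\eta)\,\sigma(t)\). This vanishes exactly when \(\sigma(t)=\eta\), that is, at \(t^*=\log\frac{\eta}{1-\eta}\). Moreover \(C_\eta(t)\to\infty\) as \(t\to\pm\infty\), so \(t^*\) is the unique minimizer over the extended line. For \(\eta=1\) we have \(C_1(t)=\ell(t)\), which is strictly decreasing with infimum \(0\), attained only at \(t=+\infty\). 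The case \(\eta=0\) is symmetric and gives \(-\infty\). So in all cases \(\operatorname{argmin} C_\eta=\{\log\frac{\eta}{1-\eta}\}\), with the stated conventions.

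For the lifting step, set \(f^*:=\log\frac{\eta}{1-\eta}\). It is measurable, since \(\eta\) is a measurable version of the conditional probability. Pointwise, \(C_\eta(f)\ge C_\eta(f^*)\), so \(\mathcal L(f)\ge\mathcal L(f^*)\) for every \(f\); also \(\mathcal L(f^*)\le\log 2<\infty\), because \(C_\eta(f^*)\le C_\eta(0)=\log 2\). Hence \(f^*\) attains the infimum, and so does any \(f\) equal to \(f^*\) almost surely. Conversely, suppose \(\mathcal L(f)=\mathcal L(f^*)\). The difference \(C_\eta(f)-C_\eta(f^*)\) is a nonnegative random variable with zero expectation, since both risks are finite. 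It therefore vanishes almost surely, and uniqueness of the scalar minimizer forces \(f=f^*\) almost surely.

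The main obstacle is bookkeeping rather than substance. One issue is the extended-value conventions when \(\eta\in\{0,1\}\) on a set of positive measure: there, finite-valued \(f\) cannot attain the infimum, so "minimizes" has to be read over extended-valued measurable \(f\). The other is ensuring that subtracting the two risks is legitimate, which follows from finiteness of \(\mathcal L(f^*)\). I would handle both explicitly by stating the convention \(\ell(\pm\infty)\) at the outset and noting \(\mathcal L(f^*)\le\log 2\).
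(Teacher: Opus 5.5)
Your proposal is correct and follows the same route as the paper: reduce to the conditional risk $\varphi_\eta$, solve $\sigma(t)=\eta$ for the stationary point, use strict convexity for uniqueness, and read $\eta\in\{0,1\}$ as extended-value limits. Your lifting step, which uses $\mathcal L(f^*)\le\log 2$ and the fact that a nonnegative variable with zero mean vanishes almost surely, makes explicit the ``only if'' direction that the paper compresses into ``taking the expectation over the pointwise minimizer.'' When evaluating $C_\eta(\pm\infty)$ at $\eta\in\{0,1\}$, state the convention $0\cdot\infty=0$ alongside your conventions for $\ell(\pm\infty)$.
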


Define $f^*$ to be the logistic risk-minimizing preference log-odds function 
\[
f^*(x,y,y') = \log\frac{\eta(x,y,y')}{1-\eta(x,y,y')}.
\]

\begin{lemma}
\label{lem:character of pairwise preference accuracy maximizers}
A measurable triplet-wise function $f$ maximizes $P(f)$
if and only if
\[
\mathrm{sign}(f(x,y,y')) = \mathrm{sign}(2\eta(x,y,y') - 1) \quad \text{a.s.}
\]
on the set where
$ 
\eta(x,y,y')\neq \frac12,$
with arbitrary tie-breaking allowed on the set where
$\eta(x,y,y')=1/2$. Moreover,
\[
P^* := \sup_f P(f)
=
\mathbb E\left[
\max\{\eta(X,Y,Y'),1-\eta(X,Y,Y')\}
\right].
\]
\end{lemma}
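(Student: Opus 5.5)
The plan is to reduce $P(f)$ to a pointwise quantity and maximize it separately at each triplet. Write
\[
c_f(x,y,y') := \eta\,\mathbf{1}\{f>0\}+(1-\eta)\,\mathbf{1}\{f<0\}+\tfrac12\mathbf{1}\{f=0\},
\]
so that $P(f)=\mathbb E_{X,Y,Y'}[c_f(X,Y,Y')]$. Exactly one of the three indicators equals one, so $c_f$ is one of $\eta$, $1-\eta$ or $\tfrac12$. Each of these is at most $m:=\max\{\eta,1-\eta\}$, since $\tfrac12\le m$. This gives $c_f\le m$ pointwise, and integrating yields $P(f)\le \mathbb E[m]$ for every measurable $f$.

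To show the bound is attained, I would take $f_0:=2\eta-1$. This is measurable because $\eta$ is a conditional probability, so it can be chosen as a measurable function of $(x,y,y')$. A direct check gives $c_{f_0}=m$ pointwise: if $\eta>\tfrac12$ then $c_{f_0}=\eta=m$; if $\eta<\tfrac12$ then $c_{f_0}=1-\eta=m$; and if $\eta=\tfrac12$ then $c_{f_0}=\tfrac12=m$. Hence $P^*=\sup_f P(f)=\mathbb E[\max\{\eta,1-\eta\}]$, and the supremum is attained.

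For the characterization, I would note that $m-c_f\ge 0$ pointwise. Therefore $P(f)=P^*$ if and only if $\mathbb E[m-c_f]=0$, which holds if and only if $c_f=m$ almost surely. It remains to determine when $c_f=m$ at a given triplet.
\begin{itemize}
\item On $\{\eta=\tfrac12\}$, all three possible values of $c_f$ equal $\tfrac12=m$, so every sign, including $0$, is allowed. This is the arbitrary tie-breaking in the statement.
\item On $\{\eta>\tfrac12\}$, we have $m=\eta$. Here $1-\eta<\eta$ and $\tfrac12<\eta$, so $c_f=m$ forces $f>0$, i.e.\ $\mathrm{sign}(f)=\mathrm{sign}(2\eta-1)$.
\item On $\{\eta<\tfrac12\}$, the symmetric argument forces $f<0$.
\end{itemize}
Conversely, if the sign condition holds almost surely on $\{\eta\ne\tfrac12\}$, then $c_f=m$ almost surely, and so $P(f)=P^*$.

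There is no real obstacle here; the argument is entirely pointwise. The only points needing care are that $P$ is defined through $\eta$ rather than through $A$, so no conditioning step is required, and that strict inequalities are needed off the tie set to rule out $c_f=\tfrac12$ when $\eta\ne\tfrac12$.
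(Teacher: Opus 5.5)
Your proposal is correct and follows the same pointwise argument as the paper: evaluate the conditional CPA in the three cases $f>0$, $f<0$, $f=0$, identify $\max\{\eta,1-\eta\}$ as the pointwise optimum, and integrate. You also make explicit two steps the paper leaves implicit: the supremum is attained by the measurable choice $f_0=2\eta-1$, and $P(f)=P^*$ forces pointwise optimality almost surely because $\max\{\eta,1-\eta\}-c_f\ge 0$.
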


The optimal value $P^*$ is the pairwise analog of the optimal Bayes accuracy in binary classification and represents the highest achievable agreement with human preferences.

Combining Lemma~\ref{lem:character of pairwise logistic risk minimizers} with Lemma~\ref{lem:character of pairwise preference accuracy maximizers}, we can establish Fisher consistency in the following theorem. Specifically, minimizing the logistic risk recovers the true preference log-odds \(f^*(x,y,y')=\log\frac{\eta(x,y,y')}{1-\eta(x,y,y')}\), whose sign coincides almost surely with \(\sgn(2\eta(x,y,y')-1)\), implying that every logistic risk minimizer maximizes the CPA.

\begin{theorem}
\label{thm:dpo-fisher-delta}
Every minimizer of the population logistic risk maximizes the
population CPA:
\[
\arg\min_f \mathcal L(f)
\subseteq
\arg\max_fP(f),
\]
where we consider measurable triplet-wise functions $f:\mathcal X\times\mathcal Y\times\mathcal Y\to\mathbb R.$

\end{theorem}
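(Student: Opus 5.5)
The plan is to combine Lemma~\ref{lem:character of pairwise logistic risk minimizers} and Lemma~\ref{lem:character of pairwise preference accuracy maximizers} directly. Take any $f\in\arg\min_f\mathcal L(f)$, where minimization is over measurable real-valued triplet-wise functions. By Lemma~\ref{lem:character of pairwise logistic risk minimizers}, $f(x,y,y')=\log\frac{\eta(x,y,y')}{1-\eta(x,y,y')}$ almost surely, with the extended-value convention. Since $f$ is real-valued, this forces $\eta\in(0,1)$ almost surely whenever a minimizer exists. If the set $\{\eta\in\{0,1\}\}$ has positive probability, no real-valued minimizer exists, and the inclusion holds vacuously. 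I will state this case explicitly so that the extended-value convention causes no trouble.

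Next, on the full-measure set where $f=f^*$ and $\eta\in(0,1)$, I will use the strict monotonicity of $t\mapsto\log\frac{t}{1-t}$ on $(0,1)$, which vanishes exactly at $t=1/2$. This gives $f^*>0\iff\eta>1/2$ and $f^*<0\iff\eta<1/2$. Hence $\mathrm{sign}(f)=\mathrm{sign}(2\eta-1)$ almost surely on $\{\eta\neq 1/2\}$. On $\{\eta=1/2\}$ we have $f=0$, which is one admissible tie-breaking choice, and in any case Lemma~\ref{lem:character of pairwise preference accuracy maximizers} allows arbitrary behaviour there.

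Applying the ``if'' direction of Lemma~\ref{lem:character of pairwise preference accuracy maximizers} then shows that $f$ attains $P^*=\sup_f P(f)$, so $f\in\arg\max_f P(f)$. As a sanity check, I can verify pointwise that the conditional CPA integrand equals $\max\{\eta,1-\eta\}$: when $\eta>1/2$ it is $\eta$, when $\eta<1/2$ it is $1-\eta$, and when $\eta=1/2$ it is $\tfrac12$. Integrating gives $P(f)=P^*$.

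There is no real obstacle here. The only delicate point is handling the almost-sure qualifiers and the infinite values at $\eta\in\{0,1\}$, which I will resolve by the vacuity remark above.
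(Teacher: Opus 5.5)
Your proposal is correct and follows the paper's proof: Lemma~\ref{lem:character of pairwise logistic risk minimizers} identifies any minimizer with the log-odds almost surely, strict monotonicity of the logit matches its sign with that of $2\eta-1$, and Lemma~\ref{lem:character of pairwise preference accuracy maximizers} then gives CPA-optimality. Your treatment of the case $\eta\in\{0,1\}$ (if it has positive probability, no real-valued minimizer exists and the inclusion holds vacuously) is cleaner than the paper's remark that the signs of $\pm\infty$ agree, since the predictors under consideration are real-valued.
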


\subsection{Calibration Inequality}\label{subsec: calibration ineq}

The Fisher consistency result of Theorem~\ref{thm:dpo-fisher-delta} only demonstrates consistency in a qualitative way; it does not quantify how optimization error in the surrogate risk transfers to preference alignment performance.
We now strengthen this connection by establishing a calibration inequality, which controls excess CPA in terms of the excess logistic risk. 

Define
\[
\mathcal L^* := \inf_f \mathcal L(f),
\]
where the infimum is over all measurable
triplet-wise predictors $f:\mathcal X\times
\mathcal Y\times\mathcal Y\to\mathbb R$.

\begin{theorem}
\label{thm:dpo-calibration-delta}
For any measurable triplet-wise function $f$, there holds
\[
P^*-P(f)
\le
\sqrt{2}\,
\sqrt{\mathcal L(f)-\mathcal L^*}.
\]

\end{theorem}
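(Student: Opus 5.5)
The plan is to follow the standard Bartlett--Jordan--McAuliffe route: reduce both excess risks to pointwise (conditional) quantities and relate them through a convex $\psi$-transform for the logistic loss. Write $\eta=\eta(x,y,y')$ and $\alpha=f(x,y,y')$. The conditional logistic risk is
\[
C_\eta(\alpha)=\eta\,\ell(\alpha)+(1-\eta)\,\ell(-\alpha),
\]
and $\mathcal L(f)=\mathbb E[C_{\eta}(f)]$. By Lemma~\ref{lem:character of pairwise logistic risk minimizers}, $\mathcal L^*=\mathbb E[\inf_\alpha C_\eta(\alpha)]=\mathbb E[H(\eta)]$, where $H(\eta)=-\eta\log\eta-(1-\eta)\log(1-\eta)$ is the binary entropy in nats. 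Likewise, Lemma~\ref{lem:character of pairwise preference accuracy maximizers} gives
\[
P^*-P(f)=\mathbb E\bigl[\,|2\eta-1|\,\mathbf 1\{\text{$f$ errs}\}\bigr],
\]
where ``errs'' means $\operatorname{sign}(f)\ne\operatorname{sign}(2\eta-1)$, and ties contribute $\tfrac12|2\eta-1|\le|2\eta-1|$. So the pointwise excess accuracy is at most $|2\eta-1|$ on a wrong-or-tie decision, and zero otherwise.

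The key pointwise step is as follows. When $f$ errs or ties, for instance $\eta>1/2$ and $\alpha\le0$, convexity of $C_\eta$ gives $C_\eta(\alpha)\ge C_\eta(0)=\log 2$, so
\[
C_\eta(\alpha)-H(\eta)\ge \log2-H(\eta)=\KL\bigl(\Bern(\eta)\,\|\,\Bern(1/2)\bigr).
\]
Pinsker's inequality in nats then gives $\KL\ge 2(\eta-\tfrac12)^2=\tfrac12(2\eta-1)^2$. Hence, pointwise,
\[
\bigl(\text{excess accuracy}\bigr)^2\le (2\eta-1)^2\,\mathbf 1\{\text{err}\}\le 2\bigl(C_\eta(\alpha)-H(\eta)\bigr),
\]
and the bound holds trivially where $f$ is correct.

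It remains to take expectations. By Jensen (Cauchy--Schwarz),
\[
P^*-P(f)\le \sqrt{\mathbb E\bigl[(\text{excess})^2\bigr]}\le\sqrt{2\,\mathbb E\bigl[C_\eta(f)-H(\eta)\bigr]}=\sqrt2\sqrt{\mathcal L(f)-\mathcal L^*}.
\]

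The step needing most care is handling the boundary cases and infinities. When $\eta\in\{0,1\}$ the infimum is not attained, but $H=0$ still works. Measurability and integrability also need attention: if $\mathcal L(f)=\infty$ the bound is trivial. Beyond that, the main work is getting the Pinsker constant right in nats, $\KL\ge2\,\mathrm{TV}^2$ with $\mathrm{TV}=|\eta-\tfrac12|$, since that constant yields exactly $\sqrt2$.
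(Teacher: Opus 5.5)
Your proof is correct, but it differs from the paper's at the key pointwise step. The paper works with the predicted probability $q=\sigma(u)$. It first shows $P_\eta^*-P_\eta(u)\le 2|\eta-q|$, then uses the exact identity $\varphi_\eta(u)-\inf_v\varphi_\eta(v)=\KL(\Bern(\eta)\,\|\,\Bern(q))$, and applies Pinsker to that divergence. You instead use convexity of $C_\eta$ to bound the conditional excess logistic risk on wrong-sign or tied points from below by its value at $u=0$, namely $\KL(\Bern(\eta)\,\|\,\Bern(1/2))$, and apply Pinsker there; this is the Bartlett--Jordan--McAuliffe constrained-risk ($\psi$-transform) route. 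The two chains reach the same constant, because on wrong-sign points $|2\eta-1|\le 2|\eta-q|$ and $\KL(\Bern(\eta)\,\|\,\Bern(1/2))\le\KL(\Bern(\eta)\,\|\,\Bern(q))$. The paper's chain also gives $\E|\eta-\sigma(f)|\le\sqrt{(\mathcal L(f)-\mathcal L^*)/2}$, i.e.\ control of the predicted preference probabilities themselves and not just their signs. Your route has three advantages: it charges logistic excess only where the decision is wrong, it transfers verbatim to any classification-calibrated convex margin loss, and it admits a sharper conclusion. For the last point, apply Jensen directly to the convex, increasing function $\psi(\theta)=\KL(\Bern(\tfrac{1+\theta}{2})\,\|\,\Bern(\tfrac12))$ instead of first passing to $\theta^2/2$. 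This gives $\psi(P^*-P(f))\le\mathcal L(f)-\mathcal L^*$, and the stated bound then follows from $\psi(\theta)\ge\theta^2/2$.
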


\begin{remark}
The calibration inequality implies statistical consistency. For any
sequence of measurable triplet-wise functions $\{f_n\}_{n\ge1}$,
\[
\mathcal L(f_n)\to\mathcal L^*
\quad\Longrightarrow\quad
P(f_n)\to  P^*.
\]
Thus, convergence of the logistic risk guarantees convergence to
the optimal CPA.
\end{remark}


\section{Approximation Error of Score-Difference DPO}\label{sec:score-diff approx}

For any score function $s:\mathcal X\times\mathcal Y\to\mathbb R$,
define
\[
\Delta_s(x,y,y'):=s(x,y)-s(x,y').
\] Section~\ref{sec:Calibration} proves consistency and calibration for unrestricted triplet-wise functions, however,
DPO restricts the predictor to $f_\pi=\Delta_{s_\pi}$ for the policy induced score functions $s_\pi$.
By Theorem~\ref{thm:dpo-calibration-delta},
\[
P^*-\sup_s P(\Delta_s)
\le
\sqrt{2}\,
\Bigl(
\inf_s \mathcal L(\Delta_s)-\mathcal L^*
\Bigr)^{1/2}.
\]
Thus, even when restricting triplet-wise functions to the score-differences, excess CPA is controlled by the surrogate-risk approximation error. From Lemma~\ref{lem:character of pairwise logistic risk minimizers}, $f^*$ is the preference log-odds and global minimizer of $\L(f)$, so to understand the surrogate risk approximation error, we need to understand the limitations of approximating $f^*$ with a score-difference function.

The key message of this section is that restricting to score-difference functions imposes an intrinsic structural limitation on the triplet-wise preference functions that can be represented, regardless of model capacity. A score-difference function assigns each response a scalar latent score and compares responses through score differences. Consequently, score differences satisfy a structural consistency condition: for any three responses,
\[
f(x,y_1,y_2)+f(x,y_2,y_3)+f(x,y_3,y_1)=0.
\]
Thus, not every triplet-wise preference function admits a score-difference representation, and the ability of a triplet-wise function to admit a score-difference representation is governed by its behavior over all response triples.
Naturally, we ask to what extent does $f^*$ fail this condition and how this affects the performance of DPO.



We formalize this intuition by introducing triangle-curl
as a measure of score-model misspecification, using
combinatorial curl from combinatorial Hodge
theory~\citep{jiang2011statistical}.
Theorem~\ref{thm:curl_characterization} characterizes
score-difference functions through vanishing triangle-curl;
in particular, $\mathcal I(f^*)=0$ if and only if
preferences admit a Bradley--Terry representation.
Theorem~\ref{thm:triangle-curl bounds} relates this
quantity to surrogate-risk approximation error,
providing an upper bound and, under additional
boundedness assumptions, a lower bound.
We then specialize these bounds to ReLU score
models and their induced DPO policies in
Theorem~\ref{thm:score_differ}, establishing
approximation upper bounds for both unclipped
and clipped scores, together with a triangle-curl
lower bound for the clipped class under additional
boundedness assumptions on the preferences.
Finally, Theorem~\ref{thm:generalization} combines
the clipped-class approximation bound with
finite-sample estimation bounds and transfers
the resulting excess-risk guarantee to CPA
through calibration.

\paragraph{Triangle-Curl.}
For a triplet-wise function $f(x,y,y')$, we apply the combinatorial curl operator from combinatorial Hodge theory~\citep{jiang2011statistical} to $f(x,\cdot,\cdot)$:
\[
(\mathrm{curl}\, f)(x;y_1,y_2,y_3)
:=
f(x,y_1,y_2) + f(x,y_2,y_3) + f(x,y_3,y_1).
\label{eq:pointwise-curl}
\]
The combinatorial curl characterizes when a triplet-wise function can be represented by a scalar score function. Intuitively, score differences always telescope around a triangle of responses, so any score-difference model is curl-free. The following lemma shows that the converse also holds: vanishing curl is exactly the condition under which a triplet-wise function admits a score-difference representation.

\begin{lemma}\label{lem: comb curl char}
Let $f:\X \times \Y\times \Y\to\mathbb{R}$ be antisymmetric in its response arguments. Then $\mathrm{curl}\,f=0$ if and only if there exists a score function $s:\X\times \Y\to\mathbb{R}$ such that
    \[
    f(x,y,y')=s(x,y)-s(x,y')
    \]
    for all $y,y'\in \Y$.
\end{lemma}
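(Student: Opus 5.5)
The plan is to prove the two directions separately. The sufficiency of a score representation is immediate, and the substance is the construction of a score from a curl-free $f$ by fixing a base response for each prompt.

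For the ``if'' direction, suppose $f(x,y,y')=s(x,y)-s(x,y')$. Then for any $x$ and $y_1,y_2,y_3$ the three differences telescope:
\[
(\mathrm{curl}\,f)(x;y_1,y_2,y_3)=\bigl(s(x,y_1)-s(x,y_2)\bigr)+\bigl(s(x,y_2)-s(x,y_3)\bigr)+\bigl(s(x,y_3)-s(x,y_1)\bigr)=0 .
\]
This direction does not use antisymmetry.

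For the ``only if'' direction, assume $f$ is antisymmetric and $\mathrm{curl}\,f\equiv 0$.

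First, I record the consequences of antisymmetry. Setting $y=y'$ in $f(x,y,y')=-f(x,y',y)$ gives $f(x,y,y)=0$.

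Next, for each prompt $x$ I fix a base response $y_0\in\mathcal Y$ (the same $y_0$ for every $x$ is fine, since $\mathcal Y$ does not depend on $x$). I then define the score
\[
s(x,y):=f(x,y,y_0).
\]

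Then I verify the representation. For arbitrary $y,y'$, I apply $\mathrm{curl}\,f=0$ to the triple $(y,y',y_0)$:
\[
f(x,y,y')+f(x,y',y_0)+f(x,y_0,y)=0 .
\]
Antisymmetry gives $f(x,y_0,y)=-f(x,y,y_0)=-s(x,y)$, and by definition $f(x,y',y_0)=s(x,y')$. Rearranging yields
\[
f(x,y,y')=s(x,y)-s(x,y').
\]
This identity holds for all $y,y'$, including the degenerate cases where some of $y,y',y_0$ coincide. Those cases are consistent because $f(x,y,y)=0$ and the curl condition is imposed on all triples.

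I expect the main point requiring care to be the sense in which the statement holds. As written, ``$\mathrm{curl}\,f=0$'' and the representation are pointwise, for all $x,y,y'$, and the argument above handles that case with no obstacle.

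If measurability of $s$ is wanted, it follows because $s(x,\cdot)=f(x,\cdot,y_0)$ is a section of a measurable function. Choosing a constant $y_0$ avoids any measurable-selection issue.

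If an almost-everywhere version were needed, I would instead average over the base point, defining $s(x,y):=\int f(x,y,z)\,p(dz\mid x)$. I would then use Fubini to integrate the curl identity over $y_3=z$, which gives $f=\Delta_s$ almost everywhere. I would not pursue this unless the paper's later uses require it.
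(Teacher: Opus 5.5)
Your proof is correct and is the paper's argument: fix a base response $y_0$, set $s(x,y):=f(x,y,y_0)$, apply the curl condition to $(y,y',y_0)$ together with antisymmetry, and telescope for the converse. Your closing remark about averaging over the base point is the same device the paper uses for the almost-sure version in Theorem~\ref{thm:curl_characterization}.
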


For conditionally i.i.d.\ $Y_1,Y_2,Y_3\sim p(\cdot\mid X)$, define the \emph{triangle-curl} of a square-integrable triplet-wise function $f$ as
\[
\mathcal I(f)
:=
\E_{X,Y_1,Y_2,Y_3}
\left[
(\mathrm{curl}\,f)(X;Y_1,Y_2,Y_3)^2
\right].
\]
Because combinatorial curl can be negative, this quantity aggregates the squared combinatorial curl of $f$ across random response triples to measure the overall inability of $f$ to admit a score-difference representation. The key quantity that we are interested in is $\mathcal{I}(f^*)$. Since $f^*$ is the preference log-odds, not admitting a score-difference representation is an underlying property of the preferences. In particular,
$
\eta(x,y,y') = \sigma(f^*(x,y,y')),
$
and $\mathcal{I}(f^*)$ can be interpreted as measuring the extent to which the preferences do not follow the Bradley--Terry model~\citep{bradley1952rank}:
\[
\eta(x,y,y') = \sigma(s(x,y) - s(x,y')).
\]

We have a global analogy to Lemma~\ref{lem: comb curl char} using triangle-curl.
\begin{theorem}
\label{thm:curl_characterization}
Let $f:\X \times \Y \times \Y \rightarrow \R$ be square-integrable and satisfy $f(x,y,y') = -f(x,y'y)$ almost surely. Then 
$\mathcal I(f)=0$ if and only if there exists a measurable function
$
s:\mathcal X\times\mathcal Y\to\mathbb R
$
such that
\[
f(x,y,y')
=
s(x,y)-s(x,y')
\qquad\text{a.s.}
\]
In particular, under Assumption~\ref{ass:L_2}, $\mathcal{I}(f^*) = 0$ if and only if preferences almost surely follow the Bradley--Terry model:
there exists a measurable function
$
s:\mathcal X\times\mathcal Y\to\mathbb R
$ 
such that
\[
\eta(x,y,y') = \sigma(s(x,y) - s(x,y')) \quad \mathrm{a.s.}
\]
\end{theorem}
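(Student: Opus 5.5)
The easy direction is immediate. If $f(x,y,y')=s(x,y)-s(x,y')$ almost surely, then for conditionally i.i.d.\ $Y_1,Y_2,Y_3$ each pair $(X,Y_i,Y_j)$ has the law of $(X,Y,Y')$. The three telescoping terms therefore hold simultaneously almost surely, so $\mathrm{curl}\,f=0$ almost surely and $\mathcal I(f)=0$.

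For the converse I would construct $s$ explicitly as a conditional average, namely
\[
s(x,y):=\E_{Y'\sim p(\cdot\mid x)}\bigl[f(x,y,Y')\bigr].
\]
This is well defined and measurable for almost every $(x,y)$ by square-integrability and Fubini. Suppose $\mathcal I(f)=0$. Then
\[
f(X,Y_1,Y_2)+f(X,Y_2,Y_3)+f(X,Y_3,Y_1)=0
\]
almost surely in $(X,Y_1,Y_2,Y_3)$. Using antisymmetry, I would rewrite this as
\[
f(X,Y_1,Y_2)=f(X,Y_1,Y_3)-f(X,Y_2,Y_3)
\]
for almost every $Y_3$, given almost every $(X,Y_1,Y_2)$. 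Taking the conditional expectation over $Y_3\sim p(\cdot\mid X)$ with $(X,Y_1,Y_2)$ fixed then gives $f(X,Y_1,Y_2)=s(X,Y_1)-s(X,Y_2)$ almost surely, which is the claim.

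The main obstacle is making the null-set bookkeeping rigorous. The antisymmetry $f(x,y,y')=-f(x,y',y)$ holds only almost surely, so I need it to hold on the relevant pairs $(Y_2,Y_3)$ and $(Y_3,Y_1)$ simultaneously. This follows because each such pair has the same joint law as $(Y,Y')$. I also need integrability of $f(x,y,\cdot)$ for almost every $(x,y)$, which comes from $\E[f(X,Y,Y')^2]<\infty$ via Tonelli. Finally, the step of integrating out $Y_3$ requires a Fubini argument over the product measure $p(\cdot\mid x)^{\otimes 3}$, so that a statement holding almost surely for the triple yields, for almost every $(x,y_1,y_2)$, a statement holding for $p$-almost every $y_3$. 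Regular conditional distributions are assumed, as is implicit in the setup.

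For the ``in particular'' statement I would apply the theorem to $f^*$. By Assumption~\ref{ass:L_2}, which I take to guarantee that $f^*$ is finite and square-integrable, $\eta\in(0,1)$ almost surely. The function $f^*$ is antisymmetric almost surely because $\eta(x,y',y)=1-\eta(x,y,y')$. This last identity comes from the symmetric sampling of $(Y,Y')$ together with the labeling convention, and is presumably part of that assumption. Since $\sigma$ is a bijection with inverse the logit, $\eta=\sigma(\Delta_s)$ almost surely if and only if $f^*=\Delta_s$ almost surely. The equivalence with $\mathcal I(f^*)=0$ then follows from the main statement.
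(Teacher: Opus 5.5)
Your proposal is correct and takes essentially the same route as the paper. Both use the centered potential $s(x,y)=\E_{U\sim p(\cdot\mid x)}[f(x,y,U)]$, use antisymmetry to rewrite the vanishing curl as $f(x,y_1,y_2)=f(x,y_1,u)-f(x,y_2,u)$, integrate over the third response, and obtain the converse by telescoping. Your extra care with null sets (transferring a.s.\ antisymmetry to $(Y_3,Y_1)$ via equality in law, and integrability of $f(x,y,\cdot)$ via Tonelli) spells out what the paper covers with its single appeal to Fubini.
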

Theorem~\ref{thm:curl_characterization} characterizes score-difference
representations independently of the learning
objective. In particular, let $g^*$ be a
square-integrable antisymmetric population
minimizer of an unrestricted preference-learning
objective. Then $\mathcal I(g^*)=0$ if and only
if $g^*$ admits an exact score-difference
representation. This provides a representability
criterion for other preference-learning methods
that use score differences, although its
implications for excess risk depend on the
chosen loss.

\begin{assumption}\label{ass:L_2}
The preference log-odds function $f^*$ is square-integrable, and hence finite almost surely,
so that $\eta(x,y,y')\in(0,1)$ almost surely. All score functions considered are also assumed
to be square-integrable. Finally, preferences are reciprocal, i.e.
$\eta(x,y',y)=1-\eta(x,y,y')$, which implies that $f^*$ is antisymmetric:
$f^*(x,y,y')=-f^*(x,y',y)$ for all $x,y,y'$.
\end{assumption}
Under Assumption~\ref{ass:L_2}, define the centered score potential
\begin{align}\label{eq:s_star_def}
    s^*(x,y)
    :=
    \E_{U\sim p(\cdot\mid x)}
    \bigl[
        f^*(x,y,U)
    \bigr].
\end{align}
The induced score difference $\Delta_{s^*}$ is
the $L^2$ projection of $f^*$ onto the
score-difference subspace, while
$\mathcal I(f^*)/3$ is the squared $L^2$ norm
of the orthogonal residual; see Lemmas~\ref{lem:proj onto score diff} and~\ref{lem:struc_cycle} in Appendix~\ref{proof: thm triangle-curl bounds}. In this sense, $\mathcal{I}(f^*)$ measures the intrinsic component of $f^*$ that cannot be represented as a score difference.

\begin{theorem}\label{thm:triangle-curl bounds}
For any class $\mathcal S$ of square-integrable score functions, the population excess logistic risk satisfies
\[
\inf_{s\in\mathcal S}\bigl(\L(\Delta_s)-\L^*\bigr)
\;\le\;\frac{1}{8}
\inf_{s\in\mathcal S}\mathbb{E}\!\left[(\Delta_s-\Delta_{s^*})^2\right]
+\frac{1}{24}\mathcal I(f^*).
\]
Moreover, if for every $s\in\mathcal S$,
$\delta<\sigma(\Delta_s(X,Y,Y'))<1-\delta$ and
$\delta<\sigma(f^*(X,Y,Y'))<1-\delta$  for some $\delta\in(0,1/2)$ almost surely, then
\[
\frac{\delta^2}{6}\mathcal I(f^*)
\;\le\;
\inf_{s\in\mathcal S}\bigl(\L(\Delta_s)-\L^*\bigr).
\]
\end{theorem}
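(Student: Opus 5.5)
The argument has two parts: a pointwise quadratic comparison between the excess logistic risk and the squared distance to $f^*$, and the orthogonal decomposition of $f^*$ into its score-difference projection plus a curl residual. Conditioning on $(X,Y,Y')$, the conditional excess risk of a real prediction $u$ at a point with log-odds $f^*=\log\frac{\eta}{1-\eta}$ is
\[
\eta\,\ell(u)+(1-\eta)\,\ell(-u)-\bigl[\eta\,\ell(f^*)+(1-\eta)\,\ell(-f^*)\bigr].
\]
By Lemma~\ref{lem:character of pairwise logistic risk minimizers}, this equals the Bernoulli KL divergence $\KL(\Bern(\eta)\,\|\,\Bern(\sigma(u)))$. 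As a function of $u$ it has first derivative $\sigma(u)-\eta$, which vanishes at $u=f^*$, and second derivative $\sigma'(u)=\sigma(u)(1-\sigma(u))\in(0,1/4]$.

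Taylor's theorem with integral remainder around $f^*$ then gives the pointwise sandwich
\[
\tfrac12\,\min_{v\in[u,f^*]}\sigma'(v)\,(u-f^*)^2
\;\le\;\text{excess}
\;\le\;\tfrac18\,(u-f^*)^2 .
\]
Integrating with $u=\Delta_s$ gives $\L(\Delta_s)-\L^*\le \frac18\E[(\Delta_s-f^*)^2]$. Under the boundedness hypothesis, both $\sigma(\Delta_s)$ and $\sigma(f^*)$ lie in $(\delta,1-\delta)$. Since $\sigma$ is monotone, every $v$ between them also has $\sigma(v)\in(\delta,1-\delta)$, so $\sigma'(v)\ge\delta(1-\delta)\ge\delta/2$ (using $\delta<1/2$). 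This yields
\[
\L(\Delta_s)-\L^*\ge \tfrac{\delta}{4}\,\E[(\Delta_s-f^*)^2],
\]
and more crudely $\ge \frac{\delta^2}{2}\E[(\Delta_s-f^*)^2]$, which is all that is needed.

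Next comes the Pythagorean decomposition. I would establish (as the appendix lemmas referenced before the theorem indicate) that $\Delta_{s^*}$ is the $L^2$ projection of $f^*$ onto the closed subspace of score differences, and that $\E[(f^*-\Delta_{s^*})^2]=\mathcal I(f^*)/3$.

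For orthogonality, write $r=f^*-\Delta_{s^*}$. For any score $t$, expand
\[
\E[r\,\Delta_t]=\E[r(X,Y,Y')\,t(X,Y)]-\E[r(X,Y,Y')\,t(X,Y')].
\]
Antisymmetry of $f^*$ and the definition of $s^*$ give $\E[r(X,Y,Y')\mid X,Y]=s^*(X,Y)-s^*(X,Y)+\E[s^*(X,Y')\mid X]$, and $\E[s^*(X,U)\mid X]=0$ again by antisymmetry and exchangeability of $(Y,Y')$. Hence $\E[r\mid X,Y]=0$, and symmetrically $\E[r\mid X,Y']=0$, so $\E[r\,\Delta_t]=0$.

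For the residual identity, note that $\mathrm{curl}\,\Delta_{s^*}=0$, so $\mathrm{curl}\,f^*=\mathrm{curl}\,r$. Expanding the square of the sum of the three edge terms of $r$ gives
\[
\mathcal I(f^*)=3\,\E[r^2]+\text{cross terms},
\]
where each cross term has the form $\E[r(X,Y_1,Y_2)\,r(X,Y_2,Y_3)]$. Conditioning on $(X,Y_1,Y_2)$ and using $\E[r(X,Y_2,Y_3)\mid X,Y_2]=0$ makes each cross term vanish. This conditional-centering step is the only real obstacle, and it relies on Assumption~\ref{ass:L_2} for integrability and antisymmetry.

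With these in hand, orthogonality gives, for every $s\in\mathcal S$,
\[
\E[(\Delta_s-f^*)^2]=\E[(\Delta_s-\Delta_{s^*})^2]+\tfrac13\,\mathcal I(f^*).
\]
Plugging this into the upper bound and taking the infimum over $s\in\mathcal S$ gives
\[
\inf_{s\in\mathcal S}\bigl(\L(\Delta_s)-\L^*\bigr)\le \tfrac18\inf_{s\in\mathcal S}\E[(\Delta_s-\Delta_{s^*})^2]+\tfrac1{24}\,\mathcal I(f^*).
\]
For the lower bound, drop the nonnegative projection term: $\L(\Delta_s)-\L^*\ge\frac{\delta^2}{2}\cdot\frac13\,\mathcal I(f^*)=\frac{\delta^2}{6}\,\mathcal I(f^*)$ for every $s\in\mathcal S$, and then take the infimum.
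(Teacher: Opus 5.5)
Your proposal is correct and follows the paper's proof essentially step for step: the pointwise Taylor sandwich for the conditional logistic excess (the paper's first auxiliary lemma), orthogonality of $r=f^*-\Delta_{s^*}$ to every score difference via the conditional centering $\E[r\mid X,Y]=0$, the identity $\E[r^2]=\mathcal I(f^*)/3$ obtained by killing the cross terms, and then the combination of the three. One difference is that your bound $\sigma'(v)\ge\delta(1-\delta)\ge\delta/2$ actually yields the sharper lower bound $\frac{\delta}{12}\mathcal I(f^*)$, which dominates the stated $\frac{\delta^2}{6}\mathcal I(f^*)$ since $\delta<1/2$; the paper does not use this.
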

Theorem~\ref{thm:triangle-curl bounds} reduces the score-difference approximation error into two terms: an approximation error measuring how well the score-difference component $\Delta_{s^*}$ of $f^*$ can be approximated by the score class $S$, and an intrinsic error given by the triangle-curl. The first term decreases with model expressiveness, whereas the triangle-curl depends only on the underlying preference distribution. Under the additional boundedness conditions,
nonzero triangle-curl yields a positive lower
bound on the achievable excess logistic risk.
The structural obstruction vanishes exactly
when preferences admit a Bradley--Terry
representation, although approximation error
may remain within a restricted score class.

We now explicitly bound the surrogate risk approximation error, taking $\mathcal{S}$ to be a class of deep ReLU neural networks.

\paragraph{Hypothesis space.}
Let $\Omega:=\X\times\Y\subset \R^d$, $z=(x,y)$, and let $\phi(t)=\max\{0,t\}$. For any $m\in\N$ and $\v\in\R^m$, define
\[
    \phi_{\v}:\R^m\to [0,\infty)^m,
    \qquad
    z\mapsto
    \bigl(\phi((z-\v)_1),\ldots,\phi((z-\v)_m)\bigr).
\]
A ReLU neural network is a function $s:\R^d\to\R$ of the form
\begin{align}\label{eq:relu_network}
    s(z)
    =
    \W_L\phi_{\v_L}\W_{L-1}\phi_{\v_{L-1}}
    \cdots
    \W_1\phi_{\v_1}\W_0 z,
\end{align}
where $L\in\N$, $m_0=d$, $m_{L+1}=1$, $\v_i\in\R^{m_i}$, and
$\W_i\in\R^{m_{i+1}\times m_i}$ for $0\le i\le L$. For $L,N\in\N$, define
\begin{align}\label{eq:S_d_L_N}
    \mathcal S(d,L,N)
    :=
    \bigl\{
        s:\R^d\to\R
        \mid
        s \text{ is of the form \eqref{eq:relu_network} and } m_i\le N
        \text{ for }1\le i\le L
    \bigr\}.
\end{align}
The corresponding triplet-wise class is
\[
    \Delta\mathcal S(d,L,N)
    :=
    \bigl\{
        \Delta_s
        \mid
        s\in\mathcal S(d,L,N)
    \bigr\}.
\]
We impose some regularity assumption on the optimal scoring function $s^*$. In this work, we consider H\"older continuity assumption, which is standard in the literature \citep{nagler2026optimal, schmidt2020nonparametric,lu2021deep}.
For $\beta\in(0,1]$ and $h:\Omega \rightarrow \R$, the H\"older norm of $h$ on $\Omega$ is defined as
\[
    \|h\|_{\beta,\Omega}
    :=
    \max\left\{
        \sup_{z\in\Omega}|h(z)|,
        \sup_{z_1\neq z_2\in\Omega}
        \frac{|h(z_1)-h(z_2)|}{\|z_1-z_2\|_\infty^\beta}
    \right\}.
\]
\begin{assumption}\label{ass:holder}
There exist $R>0$, $\beta\in(0,1]$, and $B>0$ such that $\Omega=\X\times\Y\subseteq[-R,R]^d$ and $\|s^*\|_{\beta,\Omega}\le B$.
\end{assumption}

Let $\Theta(d,L,N)$ denote the parameter space corresponding to ReLU networks $s\in \mathcal{S}(d,L,N)$. Each parameter vector $\theta \in \Theta(d,L,N)$ induces a score function $s_\theta$, which in turn induces a policy
\[
\pi_\theta(y \mid x)
=
\frac{\pi_{\mathrm{ref}}(y \mid x)\exp(s_\theta(x,y)/\tau)}{Z_\theta(x)},
\qquad
Z_\theta(x) = \int_Y \pi_{\mathrm{ref}}(u \mid x)\exp(s_\theta(x,u)/\tau)\,du.
\]
We then have the triplet-wise score-difference function
$
f_{\pi_\theta}(x,y,y') = s_\theta(x,y) - s_\theta(x,y').
$

We cannot invoke the lower bound of Theorem~\ref{thm:triangle-curl bounds} for the score class $\mathcal S$, as there is no uniform bound of the score values, and thus $\sigma(\Delta_s(x,y,y'))$ can be made arbitrarily close to zero and one. Therefore, we additionally consider bounded ReLU network score functions. For $\theta\in\Theta(d,L,N)$, define
\[
s_{\theta,B}
:=
\operatorname{clip}_B\circ s_\theta,
\qquad
\operatorname{clip}_B(t)
:=
\max\{-B,\min\{B,t\}\},
\]
so that $\sup_{(x,y) \in \mathcal X \times \mathcal Y}|s_{\theta,B}(x,y)| \leq B$ and $B$ is the same constant from Assumption~\ref{ass:holder}. Let $\pi_{\theta,B}$ denote the policy induced by
$s_{\theta,B}$ through the same exponential
parameterization, so that
$f_{\pi_{\theta,B}}=\Delta_{s_{\theta,B}}$.

\begin{theorem}\label{thm:score_differ}
Let Assumptions~\ref{ass:L_2} and~\ref{ass:holder} hold. Then there exist constants $C_1, C_2 > 0$, depending only on $d, \beta, B, R$, such that for every $L_0, N_0 \ge 2$, if
\[
\bar L = C_1 L_0 + C_1 L_0 \left\lceil \frac{\sqrt{\log L_0}}{\sqrt{\log N_0}} \right\rceil,
\qquad
\bar N = C_1 N_0,
\]
then
\[
\inf_{\theta \in \Theta(d,\bar L,\bar N)}
\bigl[ \L(f_{\pi_\theta}) - \L^\ast \bigr]
\;\le\;
C_2 \left(\frac{1}{L_0^2 N_0^2}\right)^{2\beta/d}
\;+\;
\frac{1}{24} \mathcal I(f^\ast).
\]
Consequently, for every $\varepsilon \in (0,1)$, there exist network sizes $\bar L_\varepsilon, \bar N_\varepsilon$ such that
\[
\inf_{\theta \in \Theta(d,\bar L_\varepsilon,\bar N_\varepsilon)}
\bigl[ \L(f_{\pi_\theta}) - \L^\ast \bigr]
\;\lesssim\;
\varepsilon + \mathcal I(f^\ast).
\]
The above two upper bounds also hold true when $f_{\pi_\theta}$ is replaced by $f_{\pi_{\theta,B}}$ for the clipped score class. If additionally $a\le\eta(X,Y,Y')\le1-a$ almost surely
for some $a\in(0,1/2)$, then, for any
$0<\delta<\min\{a,\sigma(-2B)\}$, the excess DPO risk of the clipped score class is bounded below:
\[
\frac{\delta^2}{6}\mathcal I(f^*)
\le
\inf_{\theta\in\Theta(d,\bar L,\bar N)}
\bigl[\mathcal L(f_{\pi_{\theta,B}})-\mathcal L^*\bigr].
\]
\end{theorem}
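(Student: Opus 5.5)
The plan is to reduce everything to Theorem~\ref{thm:triangle-curl bounds} combined with a uniform ReLU approximation result for H\"older functions. Since $f_{\pi_\theta}=\Delta_{s_\theta}$, the upper bound of Theorem~\ref{thm:triangle-curl bounds} applied to $\mathcal S(d,\bar L,\bar N)$ gives
\[
\inf_{\theta}\bigl[\L(f_{\pi_\theta})-\L^*\bigr]\le \tfrac18\inf_\theta \E\bigl[(\Delta_{s_\theta}-\Delta_{s^*})^2\bigr]+\tfrac1{24}\mathcal I(f^*).
\]
Because $(\Delta_{s}-\Delta_{s^*})^2\le 4\|s-s^*\|_{L^\infty(\Omega)}^2$, it suffices to find $\theta$ with $\|s_\theta-s^*\|_{L^\infty(\Omega)}\lesssim (L_0N_0)^{-2\beta/d}$. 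For this I will invoke the deep ReLU approximation theorem for H\"older functions of \citet{lu2021deep} (and its refinements with depth $C_1L_0\lceil\sqrt{\log L_0}/\sqrt{\log N_0}\rceil$ type overhead). After rescaling $[-R,R]^d$ to $[0,1]^d$, which can be absorbed into the first linear layer $\W_0$ and the bias $\v_1$, that theorem gives a network of width $C_1N_0$ and depth $\bar L$ with sup-error $C B\sqrt d\,(R\text{-dependent})\,(N_0^2L_0^2)^{-\beta/d}$. Squaring produces the exponent $2\beta/d$, and the constant becomes $C_2$. One point needs care: the network architecture in \eqref{eq:relu_network} has no output bias. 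I would handle this by absorbing constants through an extra neuron fed by a shifted ReLU of a coordinate bounded below on $\Omega$, adjusting $C_1$ as needed. Alternatively, I would note that $\Delta_s$ is invariant to additive constants in $s$, so only $s^*$ up to a constant needs approximating.

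The $\varepsilon$-consequence follows by choosing $L_0,N_0$ large enough that $C_2(L_0N_0)^{-4\beta/d}\le\varepsilon$.

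For the clipped class, clipping is $1$-Lipschitz and $|s^*|\le B$ by Assumption~\ref{ass:holder}. Hence $|\operatorname{clip}_B s_\theta-s^*|=|\operatorname{clip}_B s_\theta-\operatorname{clip}_B s^*|\le|s_\theta-s^*|$ pointwise. Clipped scores are bounded and therefore square-integrable, so Theorem~\ref{thm:triangle-curl bounds} applied to the clipped class gives the same upper bounds.

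For the lower bound, I will verify the hypotheses of the second part of Theorem~\ref{thm:triangle-curl bounds} on the clipped class. Since $|\Delta_{s_{\theta,B}}|\le 2B$, we have $\sigma(\Delta_{s_{\theta,B}})\in[\sigma(-2B),1-\sigma(-2B)]$, and this interval lies strictly inside $(\delta,1-\delta)$ because $\delta<\sigma(-2B)$. Likewise $\sigma(f^*)=\eta\in[a,1-a]\subset(\delta,1-\delta)$ because $\delta<a$. The lower bound $\frac{\delta^2}{6}\mathcal I(f^*)$ then follows directly for the infimum over $\Theta(d,\bar L,\bar N)$.

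The main obstacle I expect is matching the precise depth/width bookkeeping of the cited approximation theorem, including domain rescaling and the bias-free output layer, so that $\bar L,\bar N$ take exactly the stated form with constants depending only on $d,\beta,B,R$. The rest is direct substitution.
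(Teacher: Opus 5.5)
Your proposal is correct and follows the paper's proof: apply the upper bound of Theorem~\ref{thm:triangle-curl bounds}, control $\E[(\Delta_s-\Delta_{s^*})^2]$ by a uniform ReLU approximation of $s^*$, use that $\operatorname{clip}_B$ is $1$-Lipschitz and fixes $s^*$, and check the $\delta$-conditions for the lower bound as you do (the paper uses $\E[(\Delta_s-\Delta_{s^*})^2]\le 2\E[(s-s^*)^2]$ and takes the approximant with exactly this $\bar L,\bar N$ from Theorem~3.1 of \citet{nagler2026optimal}, while your factor $4$ only changes $C_2$). One caution on the bias-free output layer: your additive-constant workaround covers only the unclipped class, because $\operatorname{clip}_B$ does not commute with shifts, so for the clipped bound you need the extra-neuron fix (a last-hidden-layer neuron with zero incoming weights and $\v_L$-entry $-1$ outputs the constant $1$, costing one unit of width).
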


Theorem~\ref{thm:score_differ} separates a capacity-dependent
approximation term, which vanishes as network size increases,
from a structural term determined by triangle-curl.
The upper bounds hold for both the original and clipped
parameterizations. For the clipped class, if the preference
probabilities are bounded away from zero and one, the
matching lower bound shows that nonzero triangle-curl
creates an irreducible excess-risk floor, regardless of
network capacity. Under these additional conditions, triangle-curl therefore
controls the best achievable excess DPO risk up to
multiplicative constants as the capacity-dependent term
vanishes.

\subsection{Finite-Sample Generalization}
\label{sec:generalization}

We now combine the approximation bounds with a
finite-sample estimation bound for empirical risk
minimization. We use the clipped parameterization
introduced above to ensure that the logistic loss
is uniformly bounded.

Fix $L_0,N_0\ge2$, with corresponding network
dimensions $\bar L,\bar N$ from
Theorem~\ref{thm:score_differ}, and write
\[
\mathcal S_B
:=
\{s_{\theta,B}:\theta\in\Theta(d,\bar L,\bar N)\},
\qquad
\Delta\mathcal S_B:=\{\Delta_s:s\in\mathcal S_B\}.
\]
Let $D_n=\{(X_i,Y_i,Y_i',A_i)\}_{i=1}^n$ be
i.i.d. samples from $D$, under the conditional
sampling model specified in Section~2.
Define the empirical logistic risk
\[
\widehat{\mathcal L}_n(f)
:=
\frac1n\sum_{i=1}^n
\log\left(1+\exp\{-A_i f(X_i,Y_i,Y_i')\}\right).
\]
Suppose a measurable empirical risk minimizer exists,
and let
\[
\widehat\theta
\in
\operatorname*{arg\,min}_{\theta\in\Theta(d,\bar L,\bar N)}
\widehat{\mathcal L}_n(f_{\pi_{\theta,B}}).
\]

Write
\[
\mathfrak R_n(\mathcal S_B)
:=
\mathbb E\left[
\sup_{s\in\mathcal S_B}
\frac1n\sum_{i=1}^n\epsilon_i s(X_i,Y_i)
\right]
\]
for the expected Rademacher complexity, where
$\epsilon_1,\ldots,\epsilon_n$ are independent uniform
signs, independent of the observations. The expectation
is over both the observations and the signs.

\begin{theorem}[Finite-sample generalization]
\label{thm:generalization}
Let Assumptions~\ref{ass:L_2} and~\ref{ass:holder} hold. Set $M=\log(1+e^{2B})$. For any $\alpha\in(0,1)$,
with probability at least $1-\alpha$,
\[
\begin{aligned}
\mathcal L(f_{\pi_{\widehat\theta,B}})
-\mathcal L^*
\le{}&
C_2\left(\frac{1}{L_0^2N_0^2}\right)^{2\beta/d}
+\frac1{24}\mathcal I(f^*)\\
&+
8\mathfrak R_n(\mathcal S_B)
+
2M\sqrt{\frac{\log(2/\alpha)}{2n}},
\end{aligned}
\]
where $C_2$ is the constant in
Theorem~\ref{thm:score_differ}.
On the same event,
\[
\begin{aligned}
P^*-P(f_{\pi_{\widehat\theta,B}})
\le \sqrt{2}\Bigg[
&
C_2\left(\frac{1}{L_0^2N_0^2}\right)^{2\beta/d}
+\frac1{24}\mathcal I(f^*)\\
&+
8\mathfrak R_n(\mathcal S_B)
+
2M\sqrt{\frac{\log(2/\alpha)}{2n}}
\Bigg]^{1/2}.
\end{aligned}
\]
\end{theorem}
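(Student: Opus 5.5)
The plan is the standard decomposition of the excess risk of an empirical risk minimizer into estimation and approximation parts. We use the clipped approximation bound of Theorem~\ref{thm:score_differ} for the latter, a Rademacher uniform deviation bound for the former, and then Theorem~\ref{thm:dpo-calibration-delta} to pass to CPA. Write $\widehat f=f_{\pi_{\widehat\theta,B}}$. Fix $\gamma>0$ and choose $\theta_\gamma\in\Theta(d,\bar L,\bar N)$ with
\[
\mathcal L(f_{\pi_{\theta_\gamma,B}})-\mathcal L^*\le \inf_{\theta}\bigl[\mathcal L(f_{\pi_{\theta,B}})-\mathcal L^*\bigr]+\gamma,
\]
and set $f_\gamma=f_{\pi_{\theta_\gamma,B}}$. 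Then
\[
\mathcal L(\widehat f)-\mathcal L^*
=\bigl[\mathcal L(\widehat f)-\widehat{\mathcal L}_n(\widehat f)\bigr]
+\bigl[\widehat{\mathcal L}_n(\widehat f)-\widehat{\mathcal L}_n(f_\gamma)\bigr]
+\bigl[\widehat{\mathcal L}_n(f_\gamma)-\mathcal L(f_\gamma)\bigr]
+\bigl[\mathcal L(f_\gamma)-\mathcal L^*\bigr].
\]
The second bracket is $\le0$ by the definition of $\widehat\theta$. The last bracket is at most $C_2(L_0^{-2}N_0^{-2})^{2\beta/d}+\mathcal I(f^*)/24+\gamma$ by the clipped version of Theorem~\ref{thm:score_differ}. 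The first and third brackets are each bounded by $\sup_{s\in\mathcal S_B}|\mathcal L(\Delta_s)-\widehat{\mathcal L}_n(\Delta_s)|$. Since the final bound does not depend on $\gamma$ and the high-probability event below does not involve $\gamma$, we can let $\gamma\to0$ at the end.

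For the uniform deviation, note that $|\Delta_s|\le 2B$ on $\Delta\mathcal S_B$. Since $\ell$ is decreasing, each loss value $\ell(A\Delta_s)$ lies in $[\ell(2B),\ell(-2B)]\subseteq[0,M]$ with $M=\log(1+e^{2B})$. We apply McDiarmid's inequality to $\sup_s(\mathcal L-\widehat{\mathcal L}_n)$ and to $\sup_s(\widehat{\mathcal L}_n-\mathcal L)$ separately, each at level $\alpha/2$; changing one sample moves either supremum by at most $M/n$. Together with symmetrization, this gives, with probability at least $1-\alpha$,
\[
\sup_{s}\bigl|\mathcal L(\Delta_s)-\widehat{\mathcal L}_n(\Delta_s)\bigr|
\le 2\,\mathfrak R_n(\ell\circ\Delta\mathcal S_B)+M\sqrt{\log(2/\alpha)/(2n)},
\]
where $\ell\circ\Delta\mathcal S_B=\{(x,y,y',a)\mapsto\ell(a\Delta_s(x,y,y'))\}$. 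Counting this once for each of the two deviation brackets produces the term $2M\sqrt{\log(2/\alpha)/(2n)}$ in the statement.

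The main obstacle is to reduce $\mathfrak R_n(\ell\circ\Delta\mathcal S_B)$ to $2\mathfrak R_n(\mathcal S_B)$, which yields $2\cdot2\cdot2\,\mathfrak R_n(\mathcal S_B)=8\mathfrak R_n(\mathcal S_B)$ overall. The loss $u\mapsto\ell(au)$ is $1$-Lipschitz because $|\ell'|\le1$, so Talagrand's contraction lemma applied coordinatewise gives $\mathfrak R_n(\ell\circ\Delta\mathcal S_B)\le\mathfrak R_n(\Delta\mathcal S_B)$. Next, $\sup_s\sum_i\epsilon_i(s(X_i,Y_i)-s(X_i,Y_i'))$ is at most the supremum of the first sum plus the supremum of $\sum_i(-\epsilon_i)s(X_i,Y_i')$. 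Because $-\epsilon_i$ has the same law as $\epsilon_i$, and $(X_i,Y_i')$ has the same law as $(X_i,Y_i)$ under the conditional i.i.d. sampling model, each of these terms equals $\mathfrak R_n(\mathcal S_B)$. Hence $\mathfrak R_n(\Delta\mathcal S_B)\le2\mathfrak R_n(\mathcal S_B)$. The point needing care is that $\mathfrak R_n(\mathcal S_B)$ must be read as a complexity over samples with the marginal law of $(X,Y)$, which holds here.

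Summing the bounds and letting $\gamma\to0$ gives the first inequality on the stated event. The CPA bound then follows on the same event by applying Theorem~\ref{thm:dpo-calibration-delta} to $\widehat f$ and using the monotonicity of the square root.
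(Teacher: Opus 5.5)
Your proposal is correct and follows essentially the same route as the paper. Both arguments use the ERM comparison against $2\sup_s|\mathcal L(\Delta_s)-\widehat{\mathcal L}_n(\Delta_s)|$, McDiarmid on the two one-sided suprema with a union bound, symmetrization and contraction, the split $\mathfrak R_n(\Delta\mathcal S_B)\le 2\mathfrak R_n(\mathcal S_B)$, the clipped bound of Theorem~\ref{thm:score_differ}, and calibration. The differences are cosmetic: the paper compares against every $\theta$ and then takes the infimum rather than using your $\gamma$-near-minimizer, and it applies contraction to $\ell$ and absorbs $A_i$ into the Rademacher signs, whereas you fold $A_i$ into the coordinatewise $1$-Lipschitz maps $u\mapsto\ell(A_iu)$.
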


Theorem~\ref{thm:generalization} separates the
capacity-dependent approximation term, the structural
triangle-curl term, and finite-sample estimation error.
Increasing network capacity reduces the first term
but can increase the Rademacher complexity.
The theorem leaves this trade-off explicit rather
than prescribing depth and width as functions of $n$.
Unlike the lower bound in
Theorem~\ref{thm:score_differ}, these upper bounds
do not require preference probabilities to be
bounded away from zero and one.

For an approximate empirical minimizer with optimization error $\varepsilon_{\mathrm{opt}}$, we have
\[
\widehat{\mathcal L}_n(f_{\pi_{\widehat\theta,B}})
\le
\inf_{\theta\in\Theta(d,\bar L,\bar N)}
\widehat{\mathcal L}_n(f_{\pi_{\theta,B}})
+\varepsilon_{\mathrm{opt}},
\]
and the same conclusions of Theorem~\ref{thm:generalization} hold with
$\varepsilon_{\mathrm{opt}}$ added to the
excess-risk bound and inside the square brackets
of the CPA bound.

\section{Conclusion}
We studied Direct Preference Optimization through the lens of statistical learning theory. Using standard consistency and calibration properties of logistic risk, we connected DPO's objective to contextual preference accuracy, which measures agreement between preference judgments and policy reweighting relative to the reference. For score-difference parameterizations, we characterized approximation error through a capacity-dependent score-approximation term and an intrinsic triangle-curl term, which vanishes exactly under the Bradley--Terry model. For ReLU score models, the capacity-dependent term can be made arbitrarily small, while triangle-curl yields an irreducible approximation barrier under additional boundedness assumptions. We also established finite-sample excess-risk bounds for clipped empirical risk minimizers, separating approximation, structural misspecification, and estimation error, and transferred these bounds to CPA through calibration.

Future work includes selecting network capacity as a function of sample size to obtain explicit convergence rates, developing richer preference models beyond score differences that can represent nonzero triangle-curl, and constructing estimators of triangle-curl as a practical diagnostic for violations of the Bradley--Terry assumption. Finally, Assumption~\ref{ass:L_2} excludes deterministic preferences by requiring square-integrable preference log-odds, and hence $\eta\in(0,1)$ almost surely. An interesting direction is to relax this assumption, for example by adapting truncation techniques similar to those of~\citet{zhang2024classification}.

\bibliography{main}
\bibliographystyle{tmlr}

\appendix
\appendix

\section{Omitted Proofs}\label{apd:first}
For the fixed triple $(x,y,y')$, define the \emph{conditional logistic risk}
$
\varphi_\eta(u)
:=
\eta \,\ell(u)
+
(1-\eta)\,\ell(-u),
$
where $\eta = \eta(x,y,y')$ and $u\in \mathbb{R}$. We have
$
\mathcal L(f)
=
\mathbb{E}_{X,Y,Y'}
\!\left[
\varphi_{\eta(X,Y,Y')}
\!\bigl(
f(X,Y,Y')
\bigr)
\right].
$
The conditional logistic risk provides a convenient pointwise characterization of the population risk. We similarly define the \emph{conditional CPA} 
$
P_\eta(u)=\eta\,\mathbf 1\{u>0\}+(1-\eta)\,\mathbf 1\{u<0\}+\frac12\,\mathbf 1\{u=0\},
$
and likewise
$
    P(f) = \E_{X,Y,Y'}[P_{\eta(X,Y,Y')}(f(X,Y,Y'))].
$
Let $P^*=\sup_f P(f)$ and $P_\eta^*=\sup_{v\in \mathbb R} P_\eta(v)$,
where the first supremum is taken over all measurable triplet-wise functions.

\subsection{Proof of Lemma~\ref{lem:character of pairwise logistic risk minimizers}}

\begin{proof}
Let $\eta=\eta(x,y,y')$ and $u=f(x,y,y')$.
Direct calculations show that the derivative of the conditional logistic risk is $\varphi_\eta'(u)=[-\eta/(1+\exp(u))+(1-\eta)/(1+\exp(-u))]$.
Setting $\varphi_\eta'(u)=0$ gives $\eta/(1+\exp(u))=(1-\eta)/(1+\exp(-u))$.
Let
$
a=\exp(u)>0.
$
Then $\eta/(1+a)=(1-\eta)/(1+a^{-1})=(1-\eta)a/(1+a)$.
Therefore
$
a={\eta}/({1-\eta}),
$
and the stationary point $u^*$ is given by
$
u^*
=
\log \left({\eta}/({1-\eta})\right).
$ It remains to verify that $u^*$ is indeed the minimizer. Differentiating again gives $\varphi_\eta''(u)=[\eta\exp(u)/(1+\exp(u))^2+(1-\eta)\exp(-u)/(1+\exp(-u))^2\}$.
For $\eta\in(0,1)$, this quantity is strictly positive for all
finite $u$. Hence $\varphi_\eta$ is strictly convex, and the stationary
point is the unique minimizer.

If $\eta=1$, then $\varphi_\eta(u)=\ell(u)$, which is minimized in the limiting sense by sending $u\to+\infty$.
If $\eta=0$, then $\varphi_\eta(u)=\ell(-u)$, which is minimized in the limiting sense by sending $u\to-\infty$. 
Taking the expectation over the pointwise minimizer $u^*$ gives the desired result.
\end{proof}

\subsection{Proof of Lemma~\ref{lem:character of pairwise preference accuracy maximizers}}\label{lem proof: char PPA}
\begin{proof}
Let
$
\eta=\eta(x,y,y')
$
and
$
u=f(x,y,y').$ 
If $u>0$, then conditional CPA is $\eta$. If $u<0$, then the conditional CPA is $1-\eta$. If $u=0$, the conditional CPA takes value $1/2$. Therefore, the optimal conditional decision is $u>0$ if $\eta>1/2$, $u<0$ if $\eta<1/2$, and any tie-breaking decision is optimal if $\eta=1/2$. The optimal
conditional value is
$
P_\eta^* = \max\{\eta,1-\eta\}.$ 
Equivalently, on the set $\eta\neq 1/2$, optimality is characterized
by
$\operatorname{sign}(u)
=
\operatorname{sign}\!\left(\eta-\frac12\right).$ 
The lemma follows from the conditional CPA characterization of CPA.
\end{proof}

\subsection{Proof of Theorem~\ref{thm:dpo-fisher-delta}}\label{thm proof: fish}
\vspace*{-2mm}
\begin{proof}
By Lemma~\ref{lem:character of pairwise logistic risk minimizers}, any population minimizer $g^*$ of the logistic risk satisfies $g^*=\log(\eta/(1-\eta))$ almost surely, with $(x,y,y')$ suppressed. Since the logarithm is strictly increasing, $\mathrm{sign}(g^*)=\mathrm{sign}(\eta-\frac12)$ on the set $\eta\neq 1/2$. Also, if $\eta = 1$ then $g^*(x,y,y') = +\infty$ and signs agree, likewise for $\eta = 0$.
By Lemma~\ref{lem:character of pairwise preference accuracy maximizers}, this is precisely the
necessary and sufficient condition for maximizing the CPA.
\end{proof}


\subsection{Proof of Theorem~\ref{thm:dpo-calibration-delta}}\label{thm proof: cal}
\begin{lemma}\label{lem: conditional preference regret bound}
Let
$\eta = \eta(x,y,y')
$
and
$
q = \sigma(u).
$ 
Then,
$P_\eta^* - P_\eta(u)
\le
2|\eta-q|.$
\end{lemma}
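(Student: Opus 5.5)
The bound is a pointwise statement, so I plan a direct case analysis on the sign of $u$, using $\operatorname{sign}(u)=\operatorname{sign}(q-\tfrac12)$ since $q=\sigma(u)$. By the proof of Lemma~\ref{lem:character of pairwise preference accuracy maximizers}, $P_\eta^*=\max\{\eta,1-\eta\}$, and $P_\eta(u)\in\{\eta,1-\eta,\tfrac12\}$ according as $u>0$, $u<0$, $u=0$. In every case the regret is either $0$ or a quantity of the form $|2\eta-1|$ or $|\eta-\tfrac12|$. The aim is to show that such a nonzero regret can only occur when $q$ lies on the ``wrong side'' of $1/2$ relative to $\eta$, or at $1/2$, and then to bound it by a multiple of $|\eta-q|$.

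First I would treat the case where the prediction agrees with the Bayes decision: $u>0$ with $\eta\ge\tfrac12$, or $u<0$ with $\eta\le\tfrac12$. Here the regret is $0$ and the bound is trivial.

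Next comes the case $u>0$ but $\eta<\tfrac12$, so $q>\tfrac12>\eta$. The regret is $(1-\eta)-\eta=1-2\eta=2(\tfrac12-\eta)$. Since $q>\tfrac12$, we have $\tfrac12-\eta<q-\eta=|\eta-q|$, which gives regret at most $2|\eta-q|$. The case $u<0$ with $\eta>\tfrac12$ is symmetric: the regret is $2\eta-1=2(\eta-\tfrac12)<2(\eta-q)$ because $q<\tfrac12$.

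Finally, the tie case $u=0$ gives $q=\tfrac12$. The regret is $\max\{\eta,1-\eta\}-\tfrac12=|\eta-\tfrac12|=|\eta-q|\le 2|\eta-q|$. The boundary values $\eta\in\{0,1\}$ are covered by the same formulas.

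There is no real obstacle; the only point needing care is the strict versus non-strict inequalities at $\eta=\tfrac12$ and $q=\tfrac12$. The key observation is that a sign mismatch forces $1/2$ to lie between $\eta$ and $q$, hence $|\eta-\tfrac12|\le|\eta-q|$. This is what will later be combined with Pinsker's inequality, $\varphi_\eta(u)-\varphi_\eta^*=\KL(\Bern(\eta)\,\|\,\Bern(q))\ge 2(\eta-q)^2$, followed by Jensen's inequality, to obtain Theorem~\ref{thm:dpo-calibration-delta}.
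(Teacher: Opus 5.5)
Your proposal is correct and follows essentially the same route as the paper's proof. Both split into three cases on the sign of $u$ (equivalently of $q-\tfrac12$). A sign mismatch gives regret $2|\eta-\tfrac12|\le 2|\eta-q|$, and the tie case gives $|\eta-\tfrac12|=|\eta-q|$.
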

\begin{proof}
Recall that
$P_\eta(u)
=
\eta \mathbf{1}\{u>0\}
+
(1-\eta)\mathbf{1}\{u<0\}
+
\frac12 \mathbf{1}\{u=0\},$
and from the proof of Lemma~\ref{lem:character of pairwise preference accuracy maximizers},
$
P_\eta^\ast=\max\{\eta,1-\eta\}.
$
Since $\sigma$ is strictly increasing, $u>0 \iff q>\frac12$ and $u<0 \iff q<\frac12$.
We consider three cases. Suppose $q>\frac12$. Then $u>0$, so $P_\eta(u)=\eta$. Therefore
$
P_\eta^\ast-P_\eta(u)
=
\max\{\eta,1-\eta\}-\eta,
$ 
which if $\eta\ge \frac12$, this is zero. Otherwise,
$
P_\eta^\ast-P_\eta(u)
=
1-2\eta
=
2\Bigl(\frac12-\eta\Bigr)
<
2(q-\eta)
=
2|\eta-q|,
$ 
since $q>\frac12$. When $q<\frac12$, the argument is analogous. 
If $q=\frac12$, then $u=0$, and therefore
$
P_\eta(u)=\frac12.
$
Thus, $P_\eta^\ast-P_\eta(u)
=
\max\{\eta,1-\eta\}-\frac12
=
\left|\eta-\frac12\right|
=
|\eta-q|
\le
2|\eta-q|.$
Combining the three cases proves the claim.
\end{proof}

Let $\eta=\eta(x,y,y')$, $u=f(x,y,y')$, and $q=\sigma(u)$. The conditional logistic risk can be written as $\varphi_\eta(u)=-[\eta \log q+(1-\eta)\log(1-q)]$.
In the proof of Lemma~\ref{lem:character of pairwise logistic risk minimizers}, we show that the conditional logistic risk is minimized when $q=\eta$. Therefore,
$\inf_{v} \varphi_\eta(v)
=
-\Bigl[\eta\log\eta+(1-\eta)\log(1-\eta)\Bigr],
$
where the infimum is over $v\in \R$. Subtracting gives $\varphi_\eta(u)-\inf_{v}\varphi_\eta(v)=\KL(\Bern(\eta)\,\|\,\Bern(q))=:K_{\eta,q}$.
By Pinsker's inequality, $|\eta-q|\le \left(K_{\eta,q}/2\right)^{1/2}$, and thus $|\eta-q|\le \left[(\varphi_\eta(u)-\inf_{v} \varphi_\eta(v))/2\right]^{1/2}$.
Combining the above inequality with Lemma~\ref{lem: conditional preference regret bound}, which says
$
P_\eta^* - P_\eta(u)
\le
2|\eta-q|,
$
we obtain $P_\eta^* - P_\eta(u)\le 2^{1/2}\left(\varphi_\eta(u)-\inf_v \varphi_\eta(v)\right)^{1/2}$.
Under the assumption now that $u = f(X,Y,Y')$, taking the expectation over $(X,Y,Y')$ gives
$
P^* - P(f)
=
\mathbb{E}\bigl[P_{\eta}^* - P_{\eta}(u)\bigr]
\leq
2^{1/2}
\mathbb{E}\left[
\left(\varphi_{\eta}(u) - \inf_v \varphi_{\eta}(v)\right)^{1/2}\right]
.
$
Applying Jensen's inequality to the right hand side
and using $\mathbb{E}[\varphi_{\eta}(u)] = \mathcal L(f)$ and $\mathbb{E}[\inf_v \varphi_{\eta}(v)] = \mathcal L^*$ proves Theorem~\ref{thm:dpo-calibration-delta}.


\subsection{Proof of Lemma~\ref{lem: comb curl char}}
\begin{proof}
Suppose $(\mathrm{curl}\,f)(x;y_1,y_2,y_3)=0$ for all $x,y_1,y_2,y_3$. Fix an arbitrary reference response $y_0\in Y$ and define $s(x,y):=f(x,y,y_0)$. Then
$
f(x,y,y')+f(x,y',y_0)+f(x,y_0,y)=0.
$
Using antisymmetry,
$
f(x,y_0,y)=-f(x,y,y_0)
$
so that
$
f(x,y,y')=s(x,y)-s(x,y').
$ Conversely, if $f(x,y,y')=s(x,y)-s(x,y')$, then scores telescope to give $\mathrm{curl}\,f = 0$.
\end{proof}


\subsection{Proof of Theorem~\ref{thm:curl_characterization}}
\begin{proof}
Since
$
\mathcal I(f)
=
\mathbb E\!\left[(\operatorname{curl}f)^2\right],
$
$
\mathcal I(f)=0
$ 
if and only if
$
\operatorname{curl}f=0
$
almost surely.

\emph{(1 $\Rightarrow$ 2).}
Assume $\mathcal I(f)=0$. By Fubini's theorem, for almost every $x$,
$
(\operatorname{curl}f)(x;y_1,y_2,y_3)=0
$
for $p(\cdot\mid x)^3$-almost every $(y_1,y_2,y_3)$. Define
$
s(x,y)
:=
\mathbb E_{U\sim p(\cdot\mid x)}
\bigl[f(x,y,U)\bigr].
$
Fix an $x$ such that
$
(\operatorname{curl}f)(x;y,y',u) = 0
$
for $p(\cdot\mid x)^3$-almost every $(y,y',u)$. Using antisymmetry,
$
f(x,u,y)
=
-f(x,y,u),
$
so that
$
f(x,y,y')
=
f(x,y,u)-f(x,y',u).
$
Taking expectations over $U\sim p(\cdot\mid x)$,
$
f(x,y,y')
=
\mathbb E[f(x,y,U)]
-
\mathbb E[f(x,y',U)]
=
s(x,y)-s(x,y')
$
almost surely.

\medskip

\noindent
\emph{(2 $\Rightarrow$ 1).}
Conversely, suppose there exists a measurable function $s$ such that
$
f(x,y,y')
=
s(x,y)-s(x,y')
$
almost surely. Then score differences telescope to give $\operatorname{curl}f = 0$
almost surely.
\end{proof}


\subsection{Proof of Theorem~\ref{thm:triangle-curl bounds}}\label{proof: thm triangle-curl bounds}
We begin with a few lemmas from which the result will follow easily.

\begin{lemma}\label{lem:delta-approx}
Let $\mathcal{S}$ be a class of score functions. There holds
\[
\inf_{s\in \mathcal S}
\mathcal L(\Delta_s)-\mathcal L^*\le \frac{1}{8}
\inf_{s\in \mathcal S}
\E_{X,Y,Y'}
\left[
    \bigl(\Delta_s(X,Y,Y')-f^*(X,Y,Y')\bigr)^2
\right].
\]
Moreover,  suppose that for every $s\in\mathcal S$,
$\delta<\sigma(\Delta_s(X,Y,Y'))<1-\delta$ and
$\delta<\sigma(f^*(X,Y,Y'))<1-\delta$  for some $\delta\in(0,1/2)$ almost surely.
\[
\inf_{s\in \mathcal S}
\mathcal L(\Delta_s)-\mathcal L^*
\ge
\frac{\delta^2}{2}
\inf_{s\in \mathcal S}
\E_{X,Y,Y'}
\left[
    \bigl(\Delta_s(X,Y,Y')-f^*(X,Y,Y')\bigr)^2
\right].
\]
\end{lemma}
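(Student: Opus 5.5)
The plan is to work pointwise with the conditional logistic risk $\varphi_\eta$ and then integrate. Fix a triple with $\eta=\eta(x,y,y')\in(0,1)$, which holds almost surely under Assumption~\ref{ass:L_2}, and set $u^*=f^*(x,y,y')=\log(\eta/(1-\eta))$. By the proof of Lemma~\ref{lem:character of pairwise logistic risk minimizers}, $u^*$ is the unique minimizer of $\varphi_\eta$, with $\varphi_\eta'(u^*)=0$. Moreover $\mathcal L^*=\E[\varphi_\eta(f^*)]$. Hence for any score $s$,
\[
\mathcal L(\Delta_s)-\mathcal L^*=\E\bigl[\varphi_\eta(\Delta_s)-\varphi_\eta(f^*)\bigr].
\]
The differentiation in that proof gives the key fact that the second derivative does not depend on $\eta$:
\[
\varphi_\eta''(u)=\sigma(u)\bigl(1-\sigma(u)\bigr).
\]
This is because $\eta\,\sigma(u)(1-\sigma(u))+(1-\eta)\,\sigma(-u)(1-\sigma(-u))$ collapses to $\sigma(u)(1-\sigma(u))$.

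Next apply Taylor's theorem with the integral (or Lagrange) remainder around $u^*$. For $u=\Delta_s(x,y,y')$,
\[
\varphi_\eta(u)-\varphi_\eta(u^*)=\tfrac12\,\sigma(\xi)\bigl(1-\sigma(\xi)\bigr)(u-u^*)^2
\]
for some $\xi$ between $u$ and $u^*$.

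\textbf{Upper bound.} Use $\sigma(1-\sigma)\le 1/4$. This gives the pointwise bound $\varphi_\eta(u)-\varphi_\eta(u^*)\le \tfrac18(u-u^*)^2$. Taking expectations yields $\mathcal L(\Delta_s)-\mathcal L^*\le\tfrac18\E[(\Delta_s-f^*)^2]$ for each $s$, and taking the infimum over $s\in\mathcal S$ gives the first inequality. If $\E[(\Delta_s-f^*)^2]=\infty$ the bound is trivial.

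\textbf{Lower bound.} Under the boundedness hypothesis, both $\sigma(u)$ and $\sigma(u^*)$ lie in $(\delta,1-\delta)$. Since $\sigma$ is monotone, $\sigma(\xi)$ also lies in $(\delta,1-\delta)$. On this interval $t(1-t)\ge\delta(1-\delta)\ge\delta^2$, so
\[
\varphi_\eta(u)-\varphi_\eta(u^*)\ge\tfrac{\delta^2}{2}(u-u^*)^2
\]
almost surely. Integrating and then taking the infimum over $s$, applied on both sides, gives the second inequality.

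The only delicate points are technical. First, the sets where $\eta\in\{0,1\}$ must be null; this is guaranteed by Assumption~\ref{ass:L_2}, and under the lower-bound hypothesis also by $\delta<\sigma(f^*)<1-\delta$. Second, the expectations must be finite. For the upper bound, square-integrability of $\Delta_s$ and $f^*$ ensures this, and the Taylor bound holds pointwise anyway. I expect the main obstacle to be confirming that the intermediate point $\xi$ keeps $\sigma(\xi)$ inside $(\delta,1-\delta)$. This follows from monotonicity of $\sigma$ and convexity of the interval, so the argument should go through directly.
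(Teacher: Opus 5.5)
Your proposal is correct and follows the paper's proof essentially step for step. Both arguments expand $\varphi_\eta$ pointwise about its minimizer $f^*$ with $\varphi_\eta''=\sigma(1-\sigma)$. The bound $\sigma(1-\sigma)\le 1/4$ gives the upper inequality, and $\sigma(1-\sigma)\ge\delta(1-\delta)\ge\delta^2$ on the segment between $\Delta_s$ and $f^*$ (by monotonicity of $\sigma$) gives the lower one, before integrating and taking infima over $s\in\mathcal S$.
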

\begin{proof}
Fix the triple $(x,y,y')$ and write $\eta=\eta(x,y,y')$. Using the conditional logistic risk,
for any score function $s$,
$
    \mathcal L(\Delta_s)
    =
    \E
    \left[
        \varphi_{\eta(X,Y,Y')}
        \bigl(\Delta_s(X,Y,Y')\bigr)
    \right].
$
Since
$
    \varphi_\eta'(u)=\sigma(u)-\eta,
$
$\varphi_\eta$ is minimized at
$
    f^*(x,y,y')
    =
    \log\frac{\eta(x,y,y')}{1-\eta(x,y,y')}.
$
Hence
$
    \mathcal L^*
    =
    \E
    \left[
        \varphi_{\eta(X,Y,Y')}
        \bigl(f^*(X,Y,Y')\bigr)
    \right].
$
For every $s\in\mathcal S$, subtracting the last two identities gives
\[
    \mathcal L(\Delta_s)-\mathcal L^*
    =
    \E
    \left[
        \varphi_{\eta(X,Y,Y')}
        \bigl(\Delta_s(X,Y,Y')\bigr)
        -
        \varphi_{\eta(X,Y,Y')}
        \bigl(f^*(X,Y,Y')\bigr)
    \right].
\]
A direct calculation gives
$
    \frac{d}{du}\varphi_\eta(u)
    =
    \sigma(u)-\eta
$
and
$
    \frac{d^2}{du^2}\varphi_\eta(u)
    =
    \sigma(u)(1-\sigma(u))
    \le
    \frac{1}{4}.
$
Thus $\varphi_\eta$ is $1/4$-smooth. Since $f^*(x,y,y')$ is the minimizer of $\varphi_\eta$,
$
    \varphi_\eta(\Delta_s(x,y,y'))
    -
    \varphi_\eta(f^*(x,y,y'))
    \le
    \bigl(\Delta_s(x,y,y')-f^*(x,y,y')\bigr)^2/8 .
$
Taking expectations and then the infimum over $s\in\mathcal S$ proves the upper bound.

For the lower bound, suppose that for every $s\in\mathcal S$,
$\delta<\sigma(\Delta_s(X,Y,Y'))<1-\delta$ and
$\delta<\sigma(f^*(X,Y,Y'))<1-\delta$ almost surely. Since $\sigma$ is monotone, for every
$t$ between $\Delta_s(x,y,y')$ and $f^*(x,y,y')$ we have
$\delta<\sigma(t)<1-\delta$. Hence
$
    \frac{d^2}{dt^2}\varphi_\eta(t)
    =
    \sigma(t)(1-\sigma(t))
    \ge
    \delta(1-\delta)
    \ge
    \delta^2.
$
Because $f^*(x,y,y')$ minimizes $\varphi_\eta$, Taylor's theorem gives
$
    \varphi_\eta(\Delta_s(x,y,y'))
    -
    \varphi_\eta(f^*(x,y,y'))
    \ge
    \frac{\delta^2}{2}
    \bigl(\Delta_s(x,y,y')-f^*(x,y,y')\bigr)^2 .
$
Taking expectations and then the infimum over $s\in\mathcal S$ proves the lower bound.
\end{proof}


\begin{lemma}\label{lem:proj onto score diff}
Let Assumption~\ref{ass:L_2} hold. Among all square-integrable measurable score functions, the best $L_2$ score-difference approximation to $f^*$ is induced by $s^*$. Equivalently, for the joint distribution $P_{X,Y,Y'}$, $\Delta_{s^*}$ is the $L_2(P_{X,Y,Y'})$ projection of $f^*$ onto the space of all score differences. Consequently, for any score class $\mathcal S$,
\begin{align}\label{eq:appro_decom}
    \inf_{s\in\mathcal S}
    \E
    \left[
        \bigl(\Delta_s(X,Y,Y')-f^*(X,Y,Y')\bigr)^2
    \right]
    &=
    \inf_{s\in\mathcal S}
    \E
    \left[
        \bigl(\Delta_s(X,Y,Y')-\Delta_{s^*}(X,Y,Y')\bigr)^2
    \right]
    \notag\\
    &\quad
    +
     \E
    \left[
        \bigl(\Delta_{s^*}(X,Y,Y')-f^*(X,Y,Y')\bigr)^2
    \right].
\end{align}
\end{lemma}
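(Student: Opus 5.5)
The plan is to show directly that the residual $r:=f^*-\Delta_{s^*}$ is orthogonal in $L_2(P_{X,Y,Y'})$ to every score difference $\Delta_t$ with $t$ square-integrable. Pythagoras then gives, for every such $t$,
\[
\E[(\Delta_t-f^*)^2]=\E[(\Delta_t-\Delta_{s^*})^2]+\E[(\Delta_{s^*}-f^*)^2].
\]
This identity yields both the projection claim and the decomposition \eqref{eq:appro_decom}. For the projection claim, note that $\Delta_t-\Delta_{s^*}=\Delta_{t-s^*}$ is itself a score difference. For the decomposition, take the infimum over $t\in\mathcal S$, since the last term does not depend on $t$.

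First, the well-definedness. By Assumption~\ref{ass:L_2}, $f^*\in L_2$. Jensen's inequality in $U$ gives $\E[s^*(X,Y)^2]\le \E[f^*(X,Y,U)^2]<\infty$, because $(X,Y,U)$ has the same law as $(X,Y,Y')$. So $\Delta_{s^*}\in L_2$, and every cross term below is integrable by Cauchy--Schwarz.

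Second, the orthogonality. Fix $t$ and write
\[
\E[r\,\Delta_t]=\E[r(X,Y,Y')\,t(X,Y)]-\E[r(X,Y,Y')\,t(X,Y')].
\]
The key computation is the conditional mean of $r$ given $(X,Y)$:
\[
\E[r(X,Y,Y')\mid X,Y]=s^*(X,Y)-s^*(X,Y)+\E[s^*(X,Y')\mid X].
\]
Here the first $s^*(X,Y)$ is $\E[f^*(X,Y,Y')\mid X,Y]$, which uses that $Y'$ is independent of $Y$ given $X$. Next I show $\E[s^*(X,U)\mid X=x]=0$, which is where antisymmetry enters. We have $\E[s^*(x,U)]=\E[f^*(x,U,U')]$ with $U,U'$ i.i.d.\ from $p(\cdot\mid x)$. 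This quantity equals its negative by the exchange $U\leftrightarrow U'$ combined with $f^*(x,u,u')=-f^*(x,u',u)$, so it is zero for almost every $x$, with finiteness supplied by square integrability via Fubini. Hence $\E[r\mid X,Y]=0$, and so $\E[r\,t(X,Y)]=0$.

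For the second term, antisymmetry of $f^*$ and of $\Delta_{s^*}$ gives $r(X,Y,Y')=-r(X,Y',Y)$. Exchangeability of $(Y,Y')$ given $X$ then gives $\E[r(X,Y,Y')\,t(X,Y')]=-\E[r(X,Y,Y')\,t(X,Y)]=0$. Thus $\E[r\,\Delta_t]=0$.

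The main obstacle is only measure-theoretic bookkeeping: the identities hold almost surely, and the Fubini step needs integrability. Both are handled by the $L_2$ assumptions.
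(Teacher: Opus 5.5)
Your proposal is correct and follows essentially the same route as the paper: you show that the residual $f^*-\Delta_{s^*}$ has zero conditional mean given $(X,Y)$ (via $\E[s^*(X,U)\mid X]=0$ from antisymmetry), handle the $t(X,Y')$ term by exchangeability plus antisymmetry, and conclude by Pythagoras. Your explicit Jensen check that $s^*$ is square-integrable is a small but welcome addition that the paper leaves implicit in Assumption~\ref{ass:L_2}.
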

\begin{proof}
Let $R(x,y,y'):=f^*(x,y,y')-\Delta_{s^*}(x,y,y')$. By reciprocity of preferences, $f^*$ is anti-symmetric: $f^*(x,y',y)=-f^*(x,y,y')$.
Since $\Delta_{s^*}$ is also anti-symmetric, $R$ is anti-symmetric. By antisymmetry of $f^*$ in its responses and the definition of $s^*$, we get
$
\mathbb{E}_{Y \sim p(\cdot|x)} s^*(x,Y) = 0,
$
and consequently,
\[
\begin{aligned}
    \E_{U\sim p(\cdot\mid x)}
    R(x,y,U)
    &=
    \E_U f^*(x,y,U)
    -
    s^*(x,y)
    +
    \E_U s^*(x,U)
    =
    0 .
\end{aligned}
\]
By anti-symmetry, $\E_U R(x,U,y)=0$ as well. Now take any square-integrable score $s$. Then
\[
\begin{aligned}
    &\E
    \left[
        R(X,Y,Y')
        \bigl(s(X,Y)-s(X,Y')\bigr)
    \right]
    =
    \E
    \left[
        R(X,Y,Y')s(X,Y)
    \right]
    -
    \E
    \left[
        R(X,Y,Y')s(X,Y')
    \right].
\end{aligned}
\]
Using the exchangeability of $Y$ and $Y'$ conditional on $X$ and the anti-symmetry of $R$, the second term equals the negative of the first. Therefore the previous display equals $2\E[R(X,Y,Y')s(X,Y)]=0$,
where the last equality follows from $\E_{Y'\mid X,Y}R(X,Y,Y')=0$. Hence
$f^*-\Delta_{s^*}$ is orthogonal to every score difference. This proves that
$\Delta_{s^*}$ is the $L_2(P_{X,Y,Y'})$ projection of $f^*$ onto the score-difference subspace.
The Pythagorean identity gives \eqref{eq:appro_decom}.
\end{proof}


\begin{lemma}\label{lem:struc_cycle}
Let Assumption~\ref{ass:L_2} hold, and let $Y_1,Y_2,Y_3$ be conditionally i.i.d. given $X$. Then
\[
    \E
    \left[
        \bigl(\Delta_{s^*}(X,Y,Y')-f^*(X,Y,Y')\bigr)^2
    \right]
    =
    \frac{1}{3}
    \E_{X,Y_1,Y_2,Y_3}
    \left[
    (\mathrm{curl}\, f^*)(X; Y_1,Y_2,Y_3)^2
    \right].
\]
\end{lemma}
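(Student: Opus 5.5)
Write $R := f^*-\Delta_{s^*}$, the residual from Lemma~\ref{lem:proj onto score diff}. The plan is to expand the curl of $f^*$ in terms of $R$ and then compute second moments. The proof of Lemma~\ref{lem:proj onto score diff} already gives three facts about $R$: it is antisymmetric, it satisfies $\E_U R(x,y,U)=0$ for every fixed $(x,y)$, and by antisymmetry also $\E_U R(x,U,y)=0$. Since $\Delta_{s^*}$ is curl-free (score differences telescope, as in Lemma~\ref{lem: comb curl char}), we have $\mathrm{curl}\,f^* = \mathrm{curl}\,R$ pointwise. Hence
\[
(\mathrm{curl}\,f^*)(X;Y_1,Y_2,Y_3) = R_{12}+R_{23}+R_{31},
\qquad R_{ij}:=R(X,Y_i,Y_j).
\]

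Squaring gives $\E[(\mathrm{curl}\,f^*)^2] = \E[R_{12}^2]+\E[R_{23}^2]+\E[R_{31}^2] + 2\bigl(\E[R_{12}R_{23}]+\E[R_{23}R_{31}]+\E[R_{31}R_{12}]\bigr)$. Because $Y_1,Y_2,Y_3$ are conditionally i.i.d. given $X$, each pair $(Y_i,Y_j)$ has the same joint law as $(Y,Y')$. So each diagonal term equals $\E[(\Delta_{s^*}-f^*)^2]$, and together they contribute three times the left-hand side of the lemma. Square integrability of $R$ follows from Assumption~\ref{ass:L_2}: $f^*\in L^2$, and $s^*\in L^2$ by Jensen, so all these expectations are finite.

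The remaining work, and the only real content, is to show every cross term vanishes. Take $\E[R_{12}R_{23}]$. The index shared by both factors is $Y_2$; the other two indices, $Y_1$ and $Y_3$, each appear in exactly one factor. Condition on $(X,Y_1,Y_2)$; then $R_{12}$ is fixed and $Y_3$ is independent with law $p(\cdot\mid X)$. This gives $\E[R_{12}R_{23}\mid X,Y_1,Y_2] = R_{12}\,\E_{U}R(X,Y_2,U)=0$. The same argument handles the other two cross terms, each time integrating out the index that appears in only one factor, and using the variant $\E_U R(x,U,y)=0$ when that index sits in the first slot (as in $R_{31}$). To justify the conditional expectations I will apply Cauchy--Schwarz to get $R_{12}R_{23}\in L^1$, and then use Fubini.

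Putting these together yields $\E[(\mathrm{curl}\,f^*)^2] = 3\,\E[(\Delta_{s^*}-f^*)^2]$, which is the claim. The only care needed is the integrability and Fubini bookkeeping in the cross terms; the algebra itself is routine.
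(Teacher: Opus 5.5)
Your proposal is correct and follows the paper's proof essentially step for step: you rewrite $\mathrm{curl}\,f^*$ as the curl of the residual $R=f^*-\Delta_{s^*}$, observe that the three diagonal terms each equal the left-hand side, and kill the cross terms using the mean-zero property of $R$ in each slot. The differences are cosmetic: you condition on $(X,Y_1,Y_2)$ and integrate out a single index, whereas the paper conditions on $(X,Y_2)$ and factors the product. You also spell out the $L^2$ and Cauchy--Schwarz bookkeeping that the paper leaves implicit.
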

\begin{proof}
Let $R(x,y,y'):=f^*(x,y,y')-\Delta_{s^*}(x,y,y')$.
From the proof of Lemma~\ref{lem:proj onto score diff}, $R$ is anti-symmetric and satisfies
$\E_{U\sim p(\cdot\mid x)}R(x,y,U)=0$ and $\E_{U\sim p(\cdot\mid x)}R(x,U,y)=0$.
The combinatorial curl of every score-difference function telescopes to zero. Therefore,
\begin{align}\label{eq:sum_f=sum_r}
    (\mathrm{curl}\, f^*)(X; Y_1,Y_2,Y_3)
    =
    R(X,Y_1,Y_2)+R(X,Y_2,Y_3)+R(X,Y_3,Y_1).
\end{align}
Expanding the square of the right-hand side of \eqref{eq:sum_f=sum_r}, the three squared terms have the same expectation. The cross terms vanish:
\[
\begin{aligned}
    \E
    \bigl[
        R(X,Y_1,Y_2)R(X,Y_2,Y_3)
    \bigr]
    &=
    \E_{X,Y_2}
    \left[
        \E_{Y_1\mid X,Y_2}R(X,Y_1,Y_2)
        \,
        \E_{Y_3\mid X,Y_2}R(X,Y_2,Y_3)
    \right]
    =
    0.
\end{aligned}
\]
The other two cross terms are identical. Hence
\[
    \E
    \left[
        \left(
            R(X,Y_1,Y_2)+R(X,Y_2,Y_3)+R(X,Y_3,Y_1)
        \right)^2
    \right]
    =
    3 \E
    \left[
        \bigl(\Delta_{s^*}(X,Y,Y')-f^*(X,Y,Y')\bigr)^2
    \right].
\]
Combining this identity with \eqref{eq:sum_f=sum_r} proves the claim.
\end{proof}
The proof of Theorem~\ref{thm:triangle-curl bounds} is completed by plugging Lemma~\ref{lem:proj onto score diff} and Lemma~\ref{lem:struc_cycle} into Lemma~\ref{lem:delta-approx}.

\subsection{Proof of Theorem~\ref{thm:score_differ}}\label{apd:thm}
\begin{proof}
Each $\theta\in\Theta(d,L,N)$ induces a score function
$s_\theta\in\mathcal{S}(d,L,N)$, and every
$s\in\mathcal{S}(d,L,N)$ is realized by some
$\theta\in\Theta(d,L,N)$, so the induced triplet-wise hypothesis class satisfies
$
\{f_{\pi_\theta}:\theta\in\Theta(d,L,N)\}
=
\Delta\mathcal{S}(d,L,N).
$
Thus,
$
\inf_{\theta\in\Theta(d,L,N)}\L(f_{\pi_\theta})
=
\inf_{s\in\mathcal{S}(d,L,N)}\L(\Delta_s),
$
and it suffices to work at the level of score functions.

Apply Lemma~\ref{lem:delta-approx} with score class
$\mathcal S(d,\bar L,\bar N)$ followed by Lemma~\ref{lem:proj onto score diff} to obtain
\begin{equation}\label{eq: pol approx bound}
\begin{split}
    \inf_{s\in\mathcal S(d,\bar L,\bar N)}
    \bigl[
        \L(\Delta_s)-\L^*
    \bigr]
    \le
    &\frac{1}{8}
    \inf_{s\in\mathcal S(d,\bar L,\bar N)}
    \E
    \left[
        \bigl(\Delta_s(X,Y,Y')-\Delta_{s^*}(X,Y,Y')\bigr)^2
    \right]\\
    &+\frac{1}{8}
   \E
    \Big[
        \bigl(\Delta_{s^*}(X,Y,Y')-f^*(X,Y,Y')\bigr)^2
    \Big]
\end{split}
\end{equation}
For any score $s$, write $e_s(x,y)=s(x,y)-s^*(x,y)$. Since $Y$ and $Y'$ are conditionally i.i.d. given $X$,
\[
\begin{aligned}
    \mathbb{E}_{X,Y,Y'}
    \left[
        \bigl(\Delta_s(X,Y,Y')-\Delta_{s^*}(X,Y,Y')\bigr)^2
    \right]
    &=
    2\,\mathbb{E}_{X,Y}
    \left[
        e_s(X,Y)^2
    \right]
    -
    2\,\mathbb{E}_{X}
    \left[
        \bigl(\mathbb{E}_{Y\mid X}[e_s(X,Y)]\bigr)^2
    \right]
    \\
    &\le
    2\,\mathbb{E}_{X,Y}
    \left[
        \bigl(s(X,Y)-s^*(X,Y)\bigr)^2
    \right].
\end{aligned}
\]
By Theorem 3.1 in \citet{nagler2026optimal}, there exist constants $C_1,C_{\mathrm{app}}>0$ such that, for all $L_0,N_0\ge2$, there exists $\hat s\in\mathcal S(d,\bar L,\bar N)$ satisfying $\sup_{(x,y)\in\X\times\Y}|\hat s(x,y)-s^*(x,y)|\le C_{\mathrm{app}}(1/(L_0^2N_0^2))^{\beta/d}$.
Thus,
$
    \inf_{s\in\mathcal S(d,\bar L,\bar N)}
    \E
    \left[
        \bigl(\Delta_s(X,Y,Y')-\Delta_{s^*}(X,Y,Y')\bigr)^2
    \right]
    \le
    2C_{\mathrm{app}}^2
    \left(
        1/L_0^2N_0^2
    \right)^{2\beta/d}.
$
Combining this bound and Lemma~\ref{lem:struc_cycle} with (\ref{eq: pol approx bound}) proves the upper bound for the unclipped score class after setting $C_2 = C_{\mathrm{app}}^2/4$ into the constant.
Choosing $L_0\asymp N_0\asymp \epsilon^{-d/(8\beta)}$ makes the first term of order $\epsilon$, and gives the stated $\epsilon$-form of the bound.

For the clipped parameterization, choose
$\hat\theta\in\Theta(d,\bar L,\bar N)$ such that
$s_{\hat\theta}=\hat s$, and set
$\hat s_B=\operatorname{clip}_B\circ\hat s$.
By Assumption~\ref{ass:holder}, $|s^*(x,y)|\le B$,
so clipping gives
\[
|\hat s_B(x,y)-s^*(x,y)|
\le
|\hat s(x,y)-s^*(x,y)|.
\]
Consequently,
$
\sup_{(x,y)\in \mathcal{X} \times \mathcal Y}|\hat s_B(x,y)-s^*(x,y)|
\le
C_{\mathrm{app}}\left(1/(L_0^2N_0^2)\right)^{\beta/d}.
$
The preceding conditional-i.i.d. calculation yields
$
\E\left[
\bigl(\Delta_{\hat s_B}-\Delta_{s^*}\bigr)^2
\right]
\le
2C_{\mathrm{app}}^2
\left(1/(L_0^2N_0^2)\right)^{2\beta/d}.
$
The upper bound statements for the clipped score class then follow in the same way as the unclipped score class argument.

For the lower bound, fix
$0<\delta<\min\{a,\sigma(-2B)\}$.
For every $\theta\in\Theta(d,\bar L,\bar N)$,
$|s_{\theta,B}|\le B$, and hence
$|f_{\pi_{\theta,B}}|\le2B$. Thus,
\[
\delta
<
\sigma(-2B)
\le
\sigma(f_{\pi_{\theta,B}})
\le
\sigma(2B)
=
1-\sigma(-2B)
<
1-\delta.
\]
Moreover,
\[
\delta<a\le\sigma(f^*)=\eta\le1-a<1-\delta
\qquad\text{almost surely}.
\]
The lower-bound assumptions of
Theorem~\ref{thm:triangle-curl bounds}
therefore hold uniformly over the clipped class, proving the stated lower bound.
\end{proof}

\subsection{Proof of Theorem~\ref{thm:generalization}}
Define $\mathfrak R_n(\Delta\mathcal S_B)$
analogously using the inputs $(X_i,Y_i,Y_i')$.
Since $(X,Y)$ and $(X,Y')$ have the same marginal
distribution, symmetry of the Rademacher signs gives
\[
\begin{aligned}
\mathfrak R_n(\Delta\mathcal S_B)
&=
\mathbb E\left[
\sup_{s\in\mathcal S_B}
\frac1n\sum_{i=1}^n
\epsilon_i\bigl(s(X_i,Y_i)-s(X_i,Y_i')\bigr)
\right]\\
&\le
\mathbb E\left[
\sup_{s\in\mathcal S_B}
\frac1n\sum_{i=1}^n\epsilon_i s(X_i,Y_i)
\right] +
\mathbb E\left[
\sup_{s\in\mathcal S_B}
\frac1n\sum_{i=1}^n(-\epsilon_i)s(X_i,Y_i')
\right]\\
&=
2\mathfrak R_n(\mathcal S_B).
\end{aligned}
\]

Let $\ell(u)=\log(1+e^{-u})$, and define the loss class
\[
\mathcal G
:=
\left\{
g_f:(x,y,y',a)\mapsto\ell(af(x,y,y'))\mid
f\in\Delta\mathcal S_B
\right\}.
\]
Every $f\in\Delta\mathcal S_B$ satisfies $|f|\le2B$,
so every $g\in\mathcal G$ takes values in $[0,M]$.
Moreover, $|\ell'(u)|\le1$, and hence $\ell$ is
$1$-Lipschitz. Applying the Rademacher contraction inequality gives
\[
\mathfrak R_n(\mathcal G)
\le
\mathbb E\left[
\sup_{f\in\Delta\mathcal S_B}
\frac1n\sum_{i=1}^n
\epsilon_i A_i f(X_i,Y_i,Y_i')
\right].
\]
Conditional on the observations, the signs
$\epsilon_i A_i$ are independent uniform signs.
Consequently,
\[
\mathfrak R_n(\mathcal G)
\le
\mathfrak R_n(\Delta\mathcal S_B)
\le
2\mathfrak R_n(\mathcal S_B).
\]

Define the one-sided deviations
\[
U_+
:=
\sup_{f\in\Delta\mathcal S_B}
\bigl(\mathcal L(f)-\widehat{\mathcal L}_n(f)\bigr),
\qquad
U_-
:=
\sup_{f\in\Delta\mathcal S_B}
\bigl(\widehat{\mathcal L}_n(f)-\mathcal L(f)\bigr).
\]
Standard symmetrization yields
\[
\mathbb E U_+ \le 2\mathfrak R_n(\mathcal G),
\qquad
\mathbb E U_- \le 2\mathfrak R_n(\mathcal G).
\]
Since $0\le g\le M$, replacing one observation
changes either $U_+$ or $U_-$ by at most $M/n$.
McDiarmid's inequality therefore gives, for $t>0$,
\[
\Pr\bigl(U_\pm>\mathbb E U_\pm+t\bigr)
\le
\exp\left(-\frac{2nt^2}{M^2}\right).
\]
Taking
$t=M\sqrt{\log(2/\alpha)/(2n)}$
and applying a union bound shows that, with
probability at least $1-\alpha$,
\begin{equation}\label{eq: rad}
U_n
:=
\sup_{f\in\Delta\mathcal S_B}
\left|\mathcal L(f)-\widehat{\mathcal L}_n(f)\right|
=
\max\{U_+,U_-\} \le
2\mathfrak R_n(\mathcal G)
+
M\sqrt{\frac{\log(2/\alpha)}{2n}} \le
4\mathfrak R_n(\mathcal S_B)
+
M\sqrt{\frac{\log(2/\alpha)}{2n}}.
\end{equation}

On this event, the empirical minimization property
implies that, for every
$\theta\in\Theta(d,\bar L,\bar N)$,
\[
\begin{aligned}
\mathcal L(f_{\pi_{\widehat\theta,B}})
&\le
\widehat{\mathcal L}_n(f_{\pi_{\widehat\theta,B}})
+U_n \le
\widehat{\mathcal L}_n(f_{\pi_{\theta,B}})
+U_n \le
\mathcal L(f_{\pi_{\theta,B}})+2U_n.
\end{aligned}
\]
Taking the infimum over $\theta$ gives
\[
\begin{aligned}
\mathcal L(f_{\pi_{\widehat\theta,B}})-\mathcal L^*
\le{}&
\inf_{\theta\in\Theta(d,\bar L,\bar N)}
\bigl[\mathcal L(f_{\pi_{\theta,B}})-\mathcal L^*\bigr]
+2U_n.
\end{aligned}
\]
Combining this inequality with the clipped upper
bound in Theorem~\ref{thm:score_differ} and equation~(\ref{eq: rad}) proves
the asserted excess-risk bound.

Finally, Theorem~\ref{thm:dpo-calibration-delta}
implies
\[
P^*-P(f_{\pi_{\widehat\theta,B}})
\le
\sqrt{2}
\left[
\mathcal L(f_{\pi_{\widehat\theta,B}})
-\mathcal L^*
\right]^{1/2},
\]
which proves the CPA bound on the same event.

\end{document}